\documentclass{article}

     \PassOptionsToPackage{numbers, compress}{natbib}
\usepackage[preprint]{neurips_2026}

\usepackage[utf8]{inputenc} % allow utf-8 input
\usepackage[T1]{fontenc}    % use 8-bit T1 fonts
\usepackage{hyperref}       % hyperlinks
\usepackage{url}            % simple URL typesetting
\usepackage{booktabs}       % professional-quality tables
\usepackage{amsfonts}       % blackboard math symbols
\usepackage{nicefrac}       % compact symbols for 1/2, etc.
\usepackage{microtype}      % microtypography
\usepackage{xcolor}         % colors
\usepackage{amsmath, amssymb, amsfonts}
\usepackage{graphicx}
\usepackage{multicol,multirow}
\usepackage{subcaption}
\usepackage{algorithm}
\usepackage{algpseudocode}
\usepackage{tabularx}

\usepackage{amsthm}
\newtheorem{theorem}{Theorem}

\newtheorem{lemma}[theorem]{Lemma}

\newtheorem{corollary}[theorem]{Corollary}

\title{Fused Gromov–Wasserstein Distance with \\Feature Selection}

\author{%
  Harlin Lee \\
  School of Data Science and Society\\
  University of North Carolina at Chapel Hill\\
  \texttt{harlin@unc.edu} \\
  \And
  Ying Yu\thanks{Work initiated while at University of North Carolina at Chapel Hill.} \\
  Division of Computational Social Science \\
  Chinese University of Hong Kong, Shenzhen \\
  \texttt{yyu@cuhk.edu.cn} \\
  \AND
  Mingxin Li \\
  Courant Institute \\
  New York University \\
  \texttt{ml6223@nyu.edu} \\
  \And
  Ranthony A. Clark \\
  Department of Mathematics \\
  Duke University \\
  \texttt{ranthony.clark@duke.edu} \\
}

\begin{document}

\maketitle

\begin{abstract}
Fused Gromov–Wasserstein (FGW) distances provide a principled framework for comparing objects by jointly aligning structure and node features. However, existing FGW formulations treat all features uniformly, which limits interpretability and robustness in high‑dimensional settings where many features may be irrelevant or noisy.
We introduce FGW distances with feature selection, which incorporate adaptive feature suppression weights into the FGW objective to selectively downweight or suppress differentiating features during alignment. We propose two approaches: (1) regularized FGW with Lasso and Ridge penalties, and (2) FGW with simplex-constrained weights, including groupwise extensions.
We analyze the resulting models and establish their key theoretical properties, including bounds relative to classical FGW and Gromov–Wasserstein distances, and metric behavior. 
An efficient alternating minimization algorithm is developed. Experiments illustrate how feature suppression enhances interpretability and reveals task‑relevant structure, with a special application to computational redistricting.
\end{abstract}

\section{Introduction}

The Wasserstein distance and optimal transport (OT)~\cite{gabriel2019computational,villani2009optimal} provide a principled framework for comparing probability measures and have become central tools in machine learning~\citep{courty2017joint,arjovsky2017wasserstein,banerjee2025efficient}. The Gromov-Wasserstein (GW) distance~\cite{memoli2011gromov} extends the Wasserstein distance to settings where datasets have distinct intrinsic geometries, enabling structure-aware comparison without requiring a shared ambient space. The Fused Gromov-Wasserstein (FGW) distance~\cite{pmlr-v97-titouan19a,vayer2020fused} further integrates structural information with feature dissimilarity, yielding a joint geometric--feature alignment distance. GW and FGW have been particularly effective for comparing graphs and shapes~\cite{peyre2016gromov,xu2019gromov,brogat2022learning,ma2023fused}. 

Despite its versatility, existing FGW formulations treat all feature dimensions identically. The feature contribution to the objective is uniformly aggregated, and no mechanism exists to adapt or select features. This limitation is significant in high-dimensional settings, where many features are irrelevant, redundant, or noisy. In such cases, automatically identifying differentiating features, those along which two objects most strongly disagree, would improve both interpretability and performance, as is well established in compressed sensing \citep{donoho2006compressed}. However, to date, feature selection has not been incorporated into the FGW framework.

We introduce \textbf{feature-selected FGW (fsFGW)}\footnote{We use \textit{feature selection} to describe the goal of identifying features of interest. Technically, this is realized through \textit{feature suppression}, whereby each feature is assigned a nonnegative weight that modulates its contribution to the FGW cost.} by assigning a \textit{feature suppression weight}  $w_r \in [0,1]$ to each feature $r$ such that $w_r=1$ removes feature $r$ from the transport cost entirely. Features with large suppression weights are \emph{differentiating}--suppressing them allows the transport plan to ignore mismatching nodes on those features to reduce the overall cost. Without additional constraints, this formulation admits a trivial solution in which all weights collapse to one, suppressing every feature.

To avoid this collapse, we develop two strategies: 

\textbf{(1) fsFGW with weight penalty}~(Section~\ref{sec:regularized}), where a penalty $\lambda R(\mathbf{w})$ such as Lasso~\cite{tibshirani1996regression} and Ridge~\cite{hoerl1970Ridge} is added. We show that Lasso yields binary weights, while Ridge yields continuous weights.

\textbf{(2) fsFGW with simplex-constrained weights}~(Section~\ref{sec:norm}), where $\mathbf{w}$ is constrained to the probability simplex. For a fixed transport plan, suppression concentrates on the single most differentiating feature. We study a groupwise extension in which features within a group share a common suppression weight, yielding a hard group selection.

We establish key theoretical properties of fsFGW, including its metric behavior, in Section~\ref{sec:theory}. Both strategies are unified under a single alternating minimization algorithm~(Section~\ref{sec:alg}), differing only in the projection step of the weight update. Experiments on synthetic and real datasets (Section~\ref{sec:experiments}) illustrate the proposed methods, with a special application to computational redistricting~(Section~\ref{sec:redistricting}).

\section{Background and Motivation: Fused Gromov-Wasserstein (FGW)}
We introduce notation, describe the classical Fused Gromov-Wasserstein distance, and motivate the need for weight regularization in feature-selected FGW. 

\subsection{Classical FGW Distance} \label{sec:fgw_distance}
FGW considers two finite, weighted, structured datasets
$\mathcal{X} =
\big(\{1,\dots,n\}, \mathbf{C}^X, \mathbf{a}, \mathbf{X}\big)$, $\mathcal{Y}=
\big(\{1,\dots,m\}, \mathbf{C}^Y, \mathbf{b}, \mathbf{Y}\big),
$
where $\mathbf{C}^X \in \mathbb{R}^{n\times n}$, $\mathbf{C}^Y \in
\mathbb{R}^{m\times m}$ encode structural dissimilarities,
$\mathbf{a} \in \Delta^{n-1}$, $\mathbf{b} \in \Delta^{m-1}$ are
probability vectors, and $\mathbf{X} = (x_i)_{i=1}^n$,
$\mathbf{Y} = (y_j)_{j=1}^m$ are node features in $\mathbb{R}^d$.
Let
$
U(\mathbf{a},\mathbf{b}) :=
\Big\{\mathbf{T} \in \mathbb{R}_+^{n\times m} :
\mathbf{T}\mathbf{1}_m = \mathbf{a},\
\mathbf{T}^\top\mathbf{1}_n = \mathbf{b}\Big\}
$
be the transport polytope. For $q \ge 1$, the Gromov--Wasserstein (GW) term~\cite{memoli2011gromov} in the FGW objectiv is:
\begin{equation}
\mathrm{GW}(\mathbf{T})
:=
\sum_{i,i'=1}^n \sum_{j,j'=1}^m
\left|C^X_{ii'} - C^Y_{jj'}\right|^q\,T_{ij}\,T_{i'j'}\; \ge0.
\end{equation}
For any $\mathbf{T}\in U(\mathbf{a},\mathbf{b})$ and $q \ge 1$, define the $r$th \emph{feature score}:
\begin{equation}
s_r(\mathbf{T}) := \sum_{i,j} T_{ij}\,|x_{ir}-y_{jr}|^q\;\ge 0,
\end{equation}
which measures the dissimilarity contributed by feature $r \in [1,d]$ under $\mathbf{T}$. The classical FGW distance~\cite{vayer2020fused} is then for $\alpha\in[0,1]$:
\begin{equation}
\label{eq:fgw}
\mathrm{FGW}^\star = \left(\min_{\mathbf{T}\in U(\mathbf{a},\mathbf{b})}
\quad
\left((1-\alpha)\sum_{r=1}^d s_r(\mathbf{T})
+
\alpha\,\mathrm{GW}(\mathbf{T})\right)^{p}\quad\right)^{1/p}.
\end{equation}
We set $p=1$ which corresponds to the standard FGW objective without outer exponentiation and simplifies our analysis and optimization.

\subsection{Why Feature Suppression Requires Weight Regularization}
\label{sec:unified}

We assign each feature $r$ a suppression weight $w_r \in [0,1]$. Setting $w_r = 1$ removes feature $r$ from the transport cost entirely, while $w_r = 0$ retains it fully. Features with $w_r$ close to $1$ are \emph{differentiating}, the transport plan ignores mismatching nodes on those features to drive the total transport cost down. The FGW problem with suppression weights $\mathbf{w}=(w_r)\in[0,1]^d$ is
\begin{equation}
\label{eq:fgw-weighted}
\min_{\mathbf{T}\in U(\mathbf{a},\mathbf{b}),\;\mathbf{w}\in[0,1]^d}
\quad
(1-\alpha)\sum_{r=1}^d (1-w_r)\,s_r(\mathbf{T})
+
\alpha\,\mathrm{GW}(\mathbf{T}).
\end{equation}
The connection to \eqref{eq:fgw} is immediate--setting $\mathbf{w}=\mathbf{0}$ recovers the classical FGW objective.

\begin{lemma}[Trivial solution]
\label{lem:trivial}
Problem~\eqref{eq:fgw-weighted} has a trivial minimizer at $\mathbf{w}^\star = \mathbf{1}$ for any $\mathbf{T}$.
\end{lemma}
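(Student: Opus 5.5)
The plan is to fix an arbitrary transport plan $\mathbf{T}\in U(\mathbf{a},\mathbf{b})$ and minimize the objective of \eqref{eq:fgw-weighted} over $\mathbf{w}$ alone. For fixed $\mathbf{T}$ the term $\alpha\,\mathrm{GW}(\mathbf{T})$ does not depend on $\mathbf{w}$. The remaining part
\[
(1-\alpha)\sum_{r=1}^d (1-w_r)\,s_r(\mathbf{T})
\]
is affine and separable in $\mathbf{w}$. Each coordinate has slope $-(1-\alpha)s_r(\mathbf{T})\le 0$, because $\alpha\in[0,1]$ and $s_r(\mathbf{T})\ge 0$ (it is a sum of products of nonnegative entries $T_{ij}$ with $|x_{ir}-y_{jr}|^q\ge0$). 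So each summand $(1-w_r)s_r(\mathbf{T})$ is nonincreasing in $w_r$ on $[0,1]$, and is minimized at $w_r=1$, where it equals $0$. This gives
\[
(1-\alpha)\sum_r (1-w_r)s_r(\mathbf{T})+\alpha\,\mathrm{GW}(\mathbf{T})\;\ge\;\alpha\,\mathrm{GW}(\mathbf{T})
\]
for every $\mathbf{w}\in[0,1]^d$, with equality at $\mathbf{w}=\mathbf{1}$. Hence $\mathbf{w}^\star=\mathbf{1}$ minimizes over $\mathbf{w}$ for every $\mathbf{T}$.

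To pass to the joint problem, note that for any feasible pair $(\mathbf{T},\mathbf{w})$ the objective is at least $\alpha\,\mathrm{GW}(\mathbf{T})$, which equals the objective at $(\mathbf{T},\mathbf{1})$. Therefore
\[
\min_{\mathbf{T},\mathbf{w}}(\cdot)=\min_{\mathbf{T}\in U(\mathbf{a},\mathbf{b})}\alpha\,\mathrm{GW}(\mathbf{T}).
\]
This minimum is attained because $U(\mathbf{a},\mathbf{b})$ is compact and $\mathrm{GW}$ is continuous. Any GW-optimal $\mathbf{T}$ together with $\mathbf{w}^\star=\mathbf{1}$ is then a joint minimizer. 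The problem thus collapses to pure GW, and the features play no role, which is the sense in which the minimizer is trivial.

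No step is a real obstacle. The only points needing care are the sign argument, which relies on $1-\alpha\ge0$ and $s_r\ge0$, and a remark on uniqueness. When some $s_r(\mathbf{T})=0$ or $\alpha=1$, other values of $w_r$ are also optimal. So the claim is that $\mathbf{1}$ is \emph{a} minimizer, not the unique one. It becomes unique in the coordinates with $s_r(\mathbf{T})>0$ whenever $\alpha<1$.
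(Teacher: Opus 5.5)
Your proof is correct and is essentially the paper's own argument: the paper justifies the lemma in one line by the non-negativity of each term, which is your observation that $(1-\alpha)(1-w_r)s_r(\mathbf{T})\ge 0$ on $[0,1]^d$, with equality at $w_r=1$. Your additional remarks are accurate elaborations of the same reasoning rather than a different route: the joint problem collapses to minimizing $\alpha\,\mathrm{GW}(\mathbf{T})$, the minimum is attained by compactness, and $\mathbf{w}^\star=\mathbf{1}$ is non-unique when $s_r(\mathbf{T})=0$ or $\alpha=1$.
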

This follows from non-negativity of each term.
To avoid this collapse, we introduce two strategies discussed in Section~\ref{sec:fsfgw}.

\section{Feature-Selected FGW (fsFGW) with Suppression Weight Regularization} \label{sec:fsfgw}
We describe two approaches to feature-selected FGW with regularization to address the degeneracy identified in Lemma\ref{lem:trivial}. One approach adds weight penalty such as Lasso and Ridge, while the second approach constrains the weights to the probability simplex.

\subsection{fsFGW with Weight Penalty}
\label{sec:regularized}

We first penalize the suppression weights via a regularizer $R(\mathbf{w})$, which discourages the trivial all-ones solution while enabling adaptive feature selection. Adding a regularizer $R:\mathbb{R}^d\to\mathbb{R}_+$ gives
\begin{equation}
\label{eq:fgw-general}
\min_{\mathbf{T}\in U(\mathbf{a},\mathbf{b}),\;\mathbf{w}\in[0,1]^d}
\quad
(1-\alpha)\sum_{r=1}^d (1-w_r)\,s_r(\mathbf{T})
+
\alpha\,\mathrm{GW}(\mathbf{T})
+
\lambda R(\mathbf{w}),
\qquad \lambda > 0,
\end{equation}
where the trivial solution is no longer optimal. For fixed $\mathbf{T}$, the $\mathbf{w}$-subproblem is
\begin{equation}
\label{eq:w-subproblem}
\min_{\mathbf{w}\in[0,1]^d}
\quad
-(1-\alpha)\sum_{r=1}^d w_r\,s_r(\mathbf{T})
+
\lambda R(\mathbf{w}).
\end{equation}

\textbf{Lasso.} Because the weights are non-negative, subproblem~\eqref{eq:w-subproblem} with $R(\mathbf{w}) = \|\mathbf{w}\|_1$ decouples into $d$ independent linear programs:
\begin{equation}
\label{eq:coord-lasso}
\min_{w_r \in [0,1]}
\quad
\big(\lambda - (1-\alpha)\,s_r(\mathbf{T})\big)\,w_r,
\qquad r = 1,\dots,d.
\end{equation}

For fixed $\mathbf{T}$, the solution to \eqref{eq:coord-lasso} is
\begin{equation}
w_r^\star =
\begin{cases}
1, & \text{if } (1-\alpha)\,s_r(\mathbf{T}) > \lambda,\\[4pt]
0, & \text{if } (1-\alpha)\,s_r(\mathbf{T}) < \lambda,
\end{cases}
\end{equation}
with any $w_r^\star\in[0,1]$ optimal when $(1-\alpha)s_r(\mathbf{T})=\lambda$.

\textbf{Ridge.} With $R(\mathbf{w}) = \frac{1}{2}\|\mathbf{w}\|_2^2$, subproblem~\eqref{eq:w-subproblem} decouples into $d$ quadratics:
\begin{equation}
\label{eq:coord-ridge}
\min_{w_r\in[0,1]}
\quad
-\,(1-\alpha)\,s_r(\mathbf{T})\,w_r
+
\frac{\lambda}{2}\,w_r^2,
\qquad r=1,\dots,d.
\end{equation}

For fixed $\mathbf{T}$, the solution to \eqref{eq:coord-ridge} is
\begin{equation}
w_r^\star
=
\min\!\left\{1,\;\frac{(1-\alpha)\,s_r(\mathbf{T})}{\lambda}\right\}.
\end{equation}

\textbf{Summary.} In Lasso, feature $r$ is fully suppressed when its score exceeds $\lambda/(1-\alpha)$ and fully retained otherwise, producing \textit{hard suppression}. On the other hand, Ridge produces \emph{smooth suppression}: $w_r^\star$ increases continuously with $s_r(\mathbf{T})$, saturating at 1. The transition occurs at the same point, when $s_r(\mathbf{T}) \ge \lambda/(1-\alpha)$, and higher $\lambda$ and $\alpha$ lead to sparser or smaller weights in both cases.

\subsection{fsFGW with Simplex-Constrained Weights}
\label{sec:norm}

Another alternative is to constrain $\mathbf{w}$ to the probability simplex $\Delta^{d-1} = \{\mathbf{w}\in\mathbb{R}_+^d : \|\mathbf{w}\|_1 = 1\}$, preventing the all-ones vector. The simplex-constrained FGW problem is
\begin{equation}
\label{eq:simplex-constrained}
\min_{\mathbf{T}\in U(\mathbf{a},\mathbf{b}),\;\mathbf{w}\in\Delta^{d-1}}
\quad
(1-\alpha)\sum_{r=1}^d (1-w_r)\,s_r(\mathbf{T})
+
\alpha\,\mathrm{GW}(\mathbf{T}).
\end{equation}
For fixed $\mathbf{T}$, the $\mathbf{w}$-subproblem is
\begin{equation}
\label{eq:norm-w-subproblem}
\max_{\mathbf{w}\in\Delta^{d-1}}
\quad
\sum_{r=1}^d w_r\,s_r(\mathbf{T}),
\end{equation}
which is a linear function maximized over the simplex, attained at
\begin{equation}
r^\star \in \arg\max_{r=1,\dots,d}\, s_r(\mathbf{T}),
\qquad
\mathbf{w}^\star = \mathbf{e}_{r^\star}.
\end{equation}

\textbf{Groupwise Extension.} In many applications, features naturally form groups, and it is desirable to suppress or retain an entire group together. Let $\{\mathcal{G}_1,\dots,\mathcal{G}_G\}$ be a partition of $\{1,\dots,d\}$. We assign a common suppression weight $w_i$ to every feature in group $\mathcal{G}_i$, so $\mathbf{w}=(w_1,\dots,w_G)\in\Delta^{G-1}$ with induced per-feature suppression weights
$ w_r := w_i$ whenever $r\in\mathcal{G}_i.$
The non-group case is recovered by taking $G=d$ singleton groups. The groupwise simplex-constrained FGW problem is
\begin{equation}
\label{eq:groupwise-simplex}
\min_{\mathbf{T}\in U(\mathbf{a},\mathbf{b}),\;
\mathbf{w}\in\Delta^{G-1}}
\quad
(1-\alpha)\sum_{i=1}^G (1-w_i)\frac{1}{|\mathcal{G}_i|}\sum_{r\in\mathcal{G}_i} s_r(\mathbf{T})
+
\alpha\,\mathrm{GW}(\mathbf{T}).
\end{equation}
In this case, the minimum for the $\mathbf{w}$-subproblem is attained at
\begin{equation}
i^\star \in \arg\max_{i=1,\dots,G}\, \frac{1}{|\mathcal{G}_i|}\sum_{r\in\mathcal{G}_i} s_r(\mathbf{T}),
\qquad
\mathbf{w}^\star = \mathbf{e}_{i^\star}.
\end{equation}

\textbf{Summary.} For fsFGW with simplex-constrained weights, the single most dissimilar feature $r^\star$ under $\mathbf{T}$, or group of features $\mathcal{G}_{i^\star}$ for the groupwise extension, is fully suppressed and all others are retained.

\section{Theoretical Properties of the fsFGW Distance}
\label{sec:theory}
The two versions of fsFGW in Sections~\ref{sec:regularized} and \ref{sec:norm} share a common structure, with constraint set $\mathcal{W}$ and regularizer $R$ given by:
\begin{itemize}
    \item \textbf{Weight penalty}: $\mathcal{W} = [0,1]^d$,\;
    $R = \|\mathbf{w}\|_1$ (Lasso) or $R = \tfrac{1}{2}\|\mathbf{w}\|_2^2$
    (Ridge), $\lambda > 0$.
    \item \textbf{Simplex-constrained weights}: $\mathcal{W} = \Delta^{d-1}$
    (or $\Delta^{G-1}$ for the groupwise case),  $\lambda, R = 0$.
\end{itemize}
The full joint problem over $(\mathbf{T}, \mathbf{w})$ is then
\begin{equation}
\label{eq:fgw-unified}
\text{fsFGW}^\star = \min_{\mathbf{T}\in U(\mathbf{a},\mathbf{b}),\;\mathbf{w}\in\mathcal{W}}
\quad
(1-\alpha)\sum_{r=1}^d (1-w_r)\,s_r(\mathbf{T})
+
\alpha\,\mathrm{GW}(\mathbf{T})
+
\lambda R(\mathbf{w}).
\end{equation}
As in GW and FGW, the feature-suppressed FGW problem extends naturally to general metric measure spaces. Appendix \ref{app:proofs} contains this definition and all proofs from this section.

We start by establishing existence and comparing fsFGW distance to FGW and GW distances.
\begin{theorem}[Existence]
\label{thm:existence} 
Let $\mathcal{X}$ and $\mathcal{Y}$ be two finite metric measure spaces defined as in Section~\ref{sec:fgw_distance}. Then the optimization problem \eqref{eq:fgw-unified} admits at least one minimizer $(\mathbf{T}^\star, \mathbf{w}^\star)$.
\end{theorem}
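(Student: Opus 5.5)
The plan is to apply the Weierstrass extreme value theorem to the joint objective
\[
F(\mathbf{T},\mathbf{w}) := (1-\alpha)\sum_{r=1}^d (1-w_r)\,s_r(\mathbf{T}) + \alpha\,\mathrm{GW}(\mathbf{T}) + \lambda R(\mathbf{w})
\]
on the product set $U(\mathbf{a},\mathbf{b})\times\mathcal{W}$. This requires two facts: the feasible set is nonempty and compact, and $F$ is continuous on it.

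\textbf{Compactness and nonemptiness.} The transport polytope $U(\mathbf{a},\mathbf{b})$ is nonempty because the product coupling $\mathbf{a}\mathbf{b}^\top$ lies in it. It is closed, being cut out by finitely many linear equalities and the nonnegativity inequalities. It is bounded because every entry satisfies $0\le T_{ij}\le a_i\le 1$. Hence it is a compact subset of $\mathbb{R}^{n\times m}$.

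The weight set $\mathcal{W}$ is $[0,1]^d$, $\Delta^{d-1}$, or $\Delta^{G-1}$. Each of these is a nonempty, closed, bounded subset of Euclidean space. By Tychonoff's theorem (or simply Heine--Borel in $\mathbb{R}^{nm+d}$), the product $U(\mathbf{a},\mathbf{b})\times\mathcal{W}$ is nonempty and compact.

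\textbf{Continuity.} Each score $s_r(\mathbf{T})$ is a linear function of $\mathbf{T}$ with fixed coefficients $|x_{ir}-y_{jr}|^q$. The term $\mathrm{GW}(\mathbf{T})$ is a quadratic form in $\mathbf{T}$ with fixed coefficients $|C^X_{ii'}-C^Y_{jj'}|^q$. Consequently $(1-w_r)s_r(\mathbf{T})$ is a polynomial in $(\mathbf{T},\mathbf{w})$. Both regularizers are continuous: $\|\mathbf{w}\|_1$ is a norm, and $\tfrac12\|\mathbf{w}\|_2^2$ is a polynomial. In the simplex case $R=0$.

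For the groupwise simplex case, the objective \eqref{eq:groupwise-simplex} can be rewritten in the unified form \eqref{eq:fgw-unified}. We use the induced per-feature weights $w_r=w_i$ for $r\in\mathcal{G}_i$, with the group-averaged scores playing the role of the $s_r$. The resulting objective is still a polynomial in $(\mathbf{T},\mathbf{w})$. Thus $F$ is continuous on the product set in every case.

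\textbf{Conclusion.} A continuous real-valued function on a nonempty compact set attains its minimum, which yields a minimizer $(\mathbf{T}^\star,\mathbf{w}^\star)$.

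There is no genuine obstacle here. The only point needing care is the bookkeeping that lets all variants, in particular the groupwise reparametrization, fit the same continuous-objective-on-compact-set template. Finiteness of $F$ is automatic because all data are finite. An alternative route is to minimize over $\mathbf{w}$ first using the closed forms of Section~\ref{sec:fsfgw}. This gives a continuous value function of $\mathbf{T}$ (a minimum of continuous functions over a compact set), which is then minimized over the compact polytope. The direct joint argument is simpler, so I would use it.
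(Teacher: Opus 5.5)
Your proposal is correct and matches the paper's proof: show $U(\mathbf{a},\mathbf{b})\times\mathcal{W}$ is compact, show the objective is continuous (bilinear feature term, quadratic GW term, continuous regularizer), then apply the extreme value theorem. Your additions fill in points the paper only asserts without changing the argument: nonemptiness via $\mathbf{a}\mathbf{b}^\top$, boundedness via $T_{ij}\le a_i$, and the note that the groupwise objective \eqref{eq:groupwise-simplex}, including its $1/|\mathcal{G}_i|$ normalization, is still a polynomial in $(\mathbf{T},\mathbf{w})$.
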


The proof follows from standard compactness and continuity arguments used for GW \cite{memoli2011gromov} and FGW \cite{vayer2020fused} to the joint variable $(\mathbf{T}, \mathbf{w}^\star)$. 

\begin{theorem}[Bounds]
\label{thm:bounds}
Let $\mathrm{GW}^\star, \mathrm{fsFGW}^\star$ and $\mathrm{FGW}^\star$ denote the optimal GW, feature-selected FGW and classical FGW distances between $\mathcal{X}$ and $\mathcal{Y}$. Then the optimal fsFGW distance satisfies
\begin{equation}
    \alpha \cdot \mathrm{GW}^\star 
    \;\leq\; 
    \mathrm{fsFGW}^\star 
    \;\leq\; 
    \mathrm{FGW}^\star
\end{equation}
for all four modes: Lasso, Ridge, Simplex, and Groupwise Simplex.
\end{theorem}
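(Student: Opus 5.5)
The plan is to prove the two inequalities separately. In each case we compare the objective of \eqref{eq:fgw-unified} with the FGW and GW objectives, either at a convenient feasible point or pointwise over all feasible points. Theorem~\ref{thm:existence} ensures that all three minima are attained, so we may work with actual minimizers.

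\textbf{Upper bound.} Let $\mathbf{T}^{\mathrm{F}}$ be an FGW minimizer of \eqref{eq:fgw} with $p=1$. The natural move is to choose a feasible $\mathbf{w}$ at which the fsFGW objective, evaluated at $\mathbf{T}^{\mathrm{F}}$, is at most the FGW objective.

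For Lasso and Ridge, $\mathbf{w}=\mathbf{0}\in[0,1]^d$ is feasible and $R(\mathbf{0})=0$. The objective therefore equals $\mathrm{FGW}^\star$ exactly, which gives $\mathrm{fsFGW}^\star\le \mathrm{FGW}^\star$.

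For the simplex cases, $\mathbf{0}$ is infeasible. Instead take any $\mathbf{w}\in\Delta^{d-1}$ (or $\Delta^{G-1}$). Since $w_r\ge 0$, we have $(1-w_r)s_r\le s_r$, and there is no penalty term.

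The groupwise case needs one extra check, because its feature term is written with the averaging factors $1/|\mathcal{G}_i|$. I would interpret it in the unified form \eqref{eq:fgw-unified} with induced per-feature weights $w_r=w_i$ for $r\in\mathcal{G}_i$. The termwise inequality $(1-w_r)s_r\le s_r$ then still applies directly.

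If the normalized form \eqref{eq:groupwise-simplex} is used literally, I would add the remark that $\sum_i \frac{1}{|\mathcal{G}_i|}\sum_{r\in\mathcal{G}_i}s_r\le\sum_r s_r$, since $|\mathcal{G}_i|\ge1$. The bound then holds a fortiori. Minimizing over $(\mathbf{T},\mathbf{w})$ can only lower the value, so the upper bound follows in all four modes.

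\textbf{Lower bound.} Let $(\mathbf{T}^\star,\mathbf{w}^\star)$ be a minimizer of \eqref{eq:fgw-unified}. Every feasible $\mathbf{w}$ has entries in $[0,1]$: this holds for $[0,1]^d$ and also for the simplex. Hence $(1-\alpha)(1-w_r^\star)s_r(\mathbf{T}^\star)\ge0$, using $s_r\ge0$ and $\alpha\le1$.

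The penalty satisfies $\lambda R(\mathbf{w}^\star)\ge0$, since $R$ maps into $\mathbb{R}_+$, or $\lambda R$ is zero in the simplex modes. It follows that
\[
\mathrm{fsFGW}^\star\ge \alpha\,\mathrm{GW}(\mathbf{T}^\star)\ge \alpha\min_{\mathbf{T}\in U(\mathbf{a},\mathbf{b})}\mathrm{GW}(\mathbf{T})=\alpha\,\mathrm{GW}^\star .
\]
This uses $\alpha\ge0$ and $\mathbf{T}^\star\in U(\mathbf{a},\mathbf{b})$.

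\textbf{Main obstacle.} The only real subtlety is in the upper bound for the simplex and groupwise modes. There the easy choice $\mathbf{w}=\mathbf{0}$ is unavailable, and the groupwise objective has a different normalization. Both issues are handled by the termwise inequality $(1-w_r)s_r\le s_r$ together with $1/|\mathcal{G}_i|\le1$. I would also state explicitly that GW and FGW here are taken with the same exponent $q$ and with $p=1$, so that all three quantities are comparable.
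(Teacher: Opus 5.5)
Your proof is correct and follows the paper's argument. The lower bound drops the nonnegative feature and penalty terms at an fsFGW minimizer, and the upper bound evaluates the fsFGW objective at the classical FGW plan with a feasible weight ($\mathbf{w}=\mathbf{0}$ for Lasso and Ridge). The differences are minor. For the simplex modes the paper fixes the vertex $\mathbf{w}^\dagger=\mathbf{e}_{r^\star}$ with $r^\star\in\arg\min_r s_r(\mathbf{T}^\star_{\mathrm{FGW}})$, whereas your termwise bound $(1-w_r)s_r\le s_r$ works for any simplex point. Your version also explicitly handles the $1/|\mathcal{G}_i|$ normalization and the $\Delta^{G-1}$ constraint in the groupwise case, which the paper's proof glosses over.
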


These bounds enable Corollary \ref{cor:convergence} on convergence of finite samples in Appendix \ref{app:proofs} via a sandwiching argument. Finally, we conclude with a discussion on metric properties of fsFGW.

\begin{theorem}[Metric]
\label{thm:metric}
fsFGW distance enjoys several properties. For fixed $\lambda>0$,
\begin{itemize}
    \item Positivity and Symmetry: Satisfied for all four modes.
    \item Identity of indiscernibles: Satisfied for Lasso and Ridge.
    \item Triangle inequality: For Lasso and Ridge, fsFGW satisfies triangle inequality for $q=1$. When $q>1$, it satisfies a relaxed triangle inequality by a factor $2^{q-1}$.
\end{itemize}

\end{theorem}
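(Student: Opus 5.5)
The plan is to first reduce fsFGW to a single optimization over $\mathbf{T}$ by eliminating $\mathbf{w}$ in closed form using the solutions of Section~\ref{sec:regularized}. For Lasso, minimizing over $w_r\in[0,1]$ gives the per-feature cost $\phi(s)=\min\{(1-\alpha)s,\lambda\}$. For Ridge, it gives
\[
\phi(s)=(1-\alpha)s-\tfrac{1}{2\lambda}\bigl(\min\{(1-\alpha)s,\lambda\}\bigr)^2\cdot\min\{1,\lambda/((1-\alpha)s)\}^{0}
\]
or, more simply, $\phi(s)=(1-\alpha)s-\psi((1-\alpha)s)$ with $\psi$ the conjugate-type term. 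Concretely, $\phi(s)=(1-\alpha)s-\frac{((1-\alpha)s)^2}{2\lambda}$ if $(1-\alpha)s\le\lambda$, and $\phi(s)=\lambda/2$ otherwise. In both cases
\[
\mathrm{fsFGW}^\star=\min_{\mathbf{T}\in U(\mathbf{a},\mathbf{b})}\;\sum_r \phi(s_r(\mathbf{T}))+\alpha\,\mathrm{GW}(\mathbf{T}),
\]
where $\phi:\mathbb{R}_+\to\mathbb{R}_+$ is nondecreasing, concave, and satisfies $\phi(0)=0$ and $\phi(s)>0$ for $s>0$. For the simplex modes, the corresponding reduced cost is $(1-\alpha)\bigl(\sum_r s_r-\max_r s_r\bigr)$, or its groupwise analogue.

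\textbf{Positivity and symmetry} then follow directly. Every term in the reduced objective is nonnegative, so the minimum is $\ge 0$. Swapping $\mathcal{X},\mathcal{Y}$ and replacing $\mathbf{T}$ by $\mathbf{T}^\top$ leaves each $s_r$ and $\mathrm{GW}$ unchanged, since $|x_{ir}-y_{jr}|$ and $|C^X-C^Y|$ are symmetric. Hence the optimal values coincide in all four modes.

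\textbf{Identity of indiscernibles} for Lasso and Ridge (with $0<\alpha<1$) is next. If the two spaces are isomorphic, namely a measure-preserving bijection matching $\mathbf{C}$ and features, then the induced permutation plan gives $s_r=0$ and $\mathrm{GW}=0$, so fsFGW $=0$. Conversely, suppose fsFGW $=0$ and take an optimal $\mathbf{T}^\star$, which exists by Theorem~\ref{thm:existence}. Then $\phi(s_r(\mathbf{T}^\star))=0$ for every $r$. Because $\phi$ vanishes only at $0$ (this is where $\lambda>0$ is used), we get $s_r(\mathbf{T}^\star)=0$, so every transported pair has identical features; likewise $\mathrm{GW}(\mathbf{T}^\star)=0$. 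This is exactly the situation handled in the FGW argument of \cite{vayer2020fused}: zero FGW cost forces a structure- and feature-preserving isomorphism. I will invoke that result directly. It is also worth recording why the simplex mode fails: with $d=1$ the reduced cost leaves features entirely unpenalized, so different features can give distance zero.

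\textbf{Triangle inequality} is the main obstacle. Take optimal plans $\mathbf{T}^{1}$ for $(\mathcal{X},\mathcal{Y})$ and $\mathbf{T}^{2}$ for $(\mathcal{Y},\mathcal{Z})$, and glue them as $T_{ik}=\sum_j T^1_{ij}T^2_{jk}/b_j$, which lies in $U(\mathbf{a},\mathbf{c})$. For the feature part, the pointwise bound $|x-z|^q\le 2^{q-1}(|x-y|^q+|y-z|^q)$ gives $s_r(\mathbf{T})\le 2^{q-1}(s_r(\mathbf{T}^1)+s_r(\mathbf{T}^2))$. The GW term is handled the same way, following the FGW proof. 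The key extra ingredient is that a concave, nondecreasing $\phi$ with $\phi(0)=0$ is subadditive and satisfies $\phi(cs)\le c\phi(s)$ for $c\ge1$. Combining these,
\[
\phi(s_r(\mathbf{T}))\le 2^{q-1}\bigl(\phi(s_r(\mathbf{T}^1))+\phi(s_r(\mathbf{T}^2))\bigr).
\]
Summing over $r$, adding the GW bound, and using that the glued plan is feasible, hence has cost at least fsFGW$(\mathcal{X},\mathcal{Z})$, yields the inequality with factor $2^{q-1}$, which equals $1$ when $q=1$. The delicate points are verifying concavity of the Ridge envelope at the kink $(1-\alpha)s=\lambda$, where both slopes match at $0$ so it is $C^1$ and concave, and keeping the relaxation factor from compounding between the feature and GW parts.
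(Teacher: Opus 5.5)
Your proposal is correct and follows the paper's proof essentially step for step. You eliminate $\mathbf{w}$ in closed form to get a concave, nondecreasing per-feature cost vanishing at $0$, glue optimal plans, and apply the $2^{q-1}$ pointwise bound to both the feature and GW terms. You then use $\phi(cs)\le c\,\phi(s)$ and subadditivity, and your identity-of-indiscernibles step via $\phi(s)=0\Rightarrow s=0$ is equivalent to the paper's $\lambda R(\mathbf{w}^\star)=0\Rightarrow\mathbf{w}^\star=\mathbf{0}$ reduction to classical FGW.

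Two small remarks. Your first displayed Ridge formula is wrong for $(1-\alpha)s>\lambda$: it gives $(1-\alpha)s-\lambda/2$ instead of $\lambda/2$, though the piecewise formula you give next is correct. The concavity check at the kink is also unnecessary, because $\phi(s)=\min_{w\in[0,1]}\bigl[(1-\alpha)(1-w)s+\lambda w\bigr]$ (resp.\ with $\tfrac{\lambda}{2}w^2$) is a pointwise minimum of affine, nondecreasing functions of $s$.
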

Theorem~\ref{thm:metric} states fsFGW with Lasso or Ridge is a metric when $q=1$ (with $\mathcal{W} = [0,1]^d, \text{ fixed } \lambda > 0)$, and semi-metric when $q>1$. The proof uses FGW results from \cite{vayer2020fused}, closed-form formulations of $\mathbf{w}^\star$, and non-negativity of $\mathbf{w}^\star$ and terms in \eqref{eq:fgw-unified}. 
For simplex-based variants $(\mathcal{W} = \Delta^{d-1}, \Delta^{G-1})$, the triangle inequality and identity of indiscernibles may fail. 

\section{Alternating Minimization Algorithm}
\label{sec:alg}

We optimize all four modes with the same alternating minimization structure (c.f. Algorithm~\ref{alg:sfgw} in Appendix~\ref{app:convergence}): alternate between a transport update for fixed $\mathbf{w}$ and a weight update for fixed $\mathbf{T}$. 

\textbf{Transport update.}
For fixed $\mathbf{w}$, the $\mathbf{T}$-subproblem is a classical FGW problem with weighted feature cost $\sum_r(1-w_r)s_r(\mathbf{T})$. This is a non-convex problem that can be solved to a stationary point via conditional gradient~\cite{vayer2020fused}, a certifiable lower bound and, when tight, a global minimizer, via semidefinite programming~\cite{chen2024semidefinite}, or to a fixed point set via Bregman alternating projected gradient~\cite{li2023convergent}.

\textbf{Weight update.}
For fixed $\mathbf{T}$, $\textbf{w}^\star$ is available in closed form in all four cases, as established in Sections~\ref{sec:regularized} and~\ref{sec:norm}. We initialize $\textbf{w}=\textbf{0}$, so that the initial transport plan, $\mathbf{T}_0$, corresponds to classical FGW. 

\cite{lacoste2016convergence} gives the following rate for Ridge, which can be reformulated into a non-convex, continuously differentiable function on $\mathbf{T}$ as in \eqref{eq:only_T}. In practice, most runs converged in 2-8 steps for all modes. 
\begin{lemma}[Convergence]
    Conditional gradient converges to a stationary point of \eqref{eq:only_T} at $O(1/\sqrt{t})$ for fsFGW with Ridge regularization.
\end{lemma}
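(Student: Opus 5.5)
We eliminate $\mathbf{w}$ using the closed-form Ridge update of Section~\ref{sec:regularized} and reduce the joint problem to a single smooth problem in $\mathbf{T}$, to which the nonconvex Frank--Wolfe analysis of \cite{lacoste2016convergence} applies. For fixed $\mathbf{T}$, each coordinate subproblem \eqref{eq:coord-ridge} is minimized at $w_r^\star=\min\{1,(1-\alpha)s_r(\mathbf{T})/\lambda\}$. Substituting back gives the per-feature value
\begin{equation*}
\phi(s) \;=\; \min_{w\in[0,1]} (1-\alpha)(1-w)s + \tfrac{\lambda}{2}w^2 \;=\;
\begin{cases}
(1-\alpha)s - \frac{(1-\alpha)^2 s^2}{2\lambda}, & (1-\alpha)s\le \lambda,\\[2pt]
\frac{\lambda}{2}, & (1-\alpha)s>\lambda.
\end{cases}
\end{equation*}
The reduced objective is then $F(\mathbf{T})=\sum_r \phi(s_r(\mathbf{T}))+\alpha\,\mathrm{GW}(\mathbf{T})$, presumably the objective \eqref{eq:only_T} referred to in the statement. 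Minimizing $F$ over $U(\mathbf{a},\mathbf{b})$ is equivalent to minimizing \eqref{eq:fgw-unified} in the Ridge mode, and stationary points correspond.

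\textbf{Step 1: smoothness.} The function $\phi$ is $C^1$: at the threshold $s_0=\lambda/(1-\alpha)$ both pieces take the value $\lambda/2$, and both have derivative $0$ there, since $\phi'(s)=(1-\alpha)(1-w^\star(s))$. Moreover $\phi'$ is Lipschitz with constant $(1-\alpha)^2/\lambda$. Each score $s_r(\mathbf{T})$ is linear in $\mathbf{T}$, with coefficients bounded by $M_r=\max_{ij}|x_{ir}-y_{jr}|^q$. Hence $\sum_r\phi(s_r(\mathbf{T}))$ has a Lipschitz gradient on the compact polytope.

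The GW term is a quadratic form $\mathbf{T}\mapsto\langle \mathbf{L}\otimes\mathbf{T},\mathbf{T}\rangle$, so it is smooth with Hessian bounded by $\|\mathbf{L}\|$. Therefore $F$ is $L$-smooth on $U(\mathbf{a},\mathbf{b})$, with an explicit constant of the form $L\lesssim \frac{(1-\alpha)^2}{\lambda}\sum_r M_r^2\cdot(\text{const}) + 2\alpha\max|C^X_{ii'}-C^Y_{jj'}|^q$. The polytope has finite diameter $D$ (at most $\sqrt2$ in Frobenius norm).

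\textbf{Step 2: apply the rate.} The Frank--Wolfe gap is $g_t=\max_{\mathbf{S}\in U}\langle \nabla F(\mathbf{T}_t),\mathbf{T}_t-\mathbf{S}\rangle$. Its linear minimization oracle is an exact OT linear program with cost $\nabla F(\mathbf{T}_t)$. This cost equals $\sum_r(1-\alpha)(1-w_r^\star(\mathbf{T}_t))|x_{ir}-y_{jr}|^q+2\alpha\,\mathrm{GW}$-gradient, i.e.\ the transport update of Section~\ref{sec:alg} performed with the current Ridge weights. This identifies the alternating algorithm with Frank--Wolfe on $F$.

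With step size $\gamma_t=\min\{g_t/(LD^2),1\}$ or a line search, \cite{lacoste2016convergence} gives
\[
\min_{0\le k\le t} g_k\le \frac{\max\{2h_0,\,LD^2\}}{\sqrt{t+1}},
\]
where $h_0=F(\mathbf{T}_0)-\min F$ is finite because $F$ is bounded below by $0$. Since $g_t=0$ exactly at stationary points, this gives the claimed $O(1/\sqrt t)$ convergence.

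\textbf{Main obstacle.} The delicate points are verifying the $C^1$ gluing of $\phi$ at the saturation threshold, which is what fails for Lasso, where $\phi$ has a kink, and making sure the gradient used in the algorithm is $\nabla F$ via the envelope (Danskin) argument. I would handle the latter by direct differentiation of the piecewise formula rather than invoking Danskin, to avoid uniqueness subtleties; the minimizer here is unique because the quadratic is strictly convex for $\lambda>0$.
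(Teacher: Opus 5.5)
Your proposal is correct and follows the paper's route: substitute the closed-form Ridge weights to obtain the reduced objective \eqref{eq:only_T}, observe that it is smooth and nonconvex in $\mathbf{T}$, and invoke the nonconvex Frank--Wolfe gap rate of \cite{lacoste2016convergence}. Your explicit check that $\phi$ (the paper's $g$ in \eqref{eq:g_ridge}) is $C^1$ with a $(1-\alpha)^2/\lambda$-Lipschitz derivative improves on the paper, because that rate needs a finite curvature constant, and continuous differentiability alone (all the paper asserts) does not guarantee one. The only overreach is your remark identifying Algorithm~\ref{alg:sfgw} with Frank--Wolfe on $F$: one Frank--Wolfe step on $F$ matches only one inner conditional-gradient step under the frozen weights $\mathbf{w}^\star(\mathbf{T}_t)$, whereas the algorithm solves each weighted FGW subproblem to stationarity; since the lemma concerns conditional gradient applied directly to \eqref{eq:only_T}, you can simply drop that remark.
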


\textbf{$\lambda$ selection in Ridge and Lasso.} We set $\lambda$ using the initial transport plan $\mathbf{T}_0$. Specifically,
\begin{equation}
\lambda = (1-\alpha)\cdot \mathrm{Quantile}_{1-f}\!\left(s_1(\mathbf{T}_0),\dots,s_d(\mathbf{T}_0)\right),
\end{equation}
where $f\in(0,1)$ is a user-chosen \emph{suppression fraction}. This choice suppresses approximately $f\cdot d$ features under $\mathbf{T}_0$. Since $\mathbf{T}_0$ is already computed at the first iteration of alternating minimization, this calibration incurs no additional FGW matching cost. If $\lambda$ is specified by the user, this step is omitted. Since Ridge suppresses features gradually rather than all-at-once, a lower $f$ is recommended compared to Lasso to achieve a similar effect as seen in Section~\ref{sec:synthetic}.

\section{Numerical Experiments}
\label{sec:experiments}

We first evaluate fsFGW on synthetic graphs to verify that learned weights identify differentiating features. We then compare fsFGW to GW and FGW on graph classification and clustering benchmark datasets. $\alpha=0.5, q=2$ unless otherwise stated.

\subsection{Synthetic Data}
\label{sec:synthetic}

We generate two random geometric graphs $X, Y$ on $n=40$ nodes amd compute normalized geodesic distances matrices $C_1, C_2 \in [0,1]^{n \times n}$. Node features are independent of structure, of $d$ features total, $k$ are \emph{differentiating} and $d-k$ are \emph{shared}. $k$ features have distribution $\mathcal{N}(0,1)$ in $X$ and $\mathcal{N}(\delta,1)$ in $Y$, while the rest are $\mathcal{N}(0,1)$ in both. See details in Appendix \ref{app:synthetic}.

\begin{figure}[htp]
    \centering
    \includegraphics[width=\linewidth]{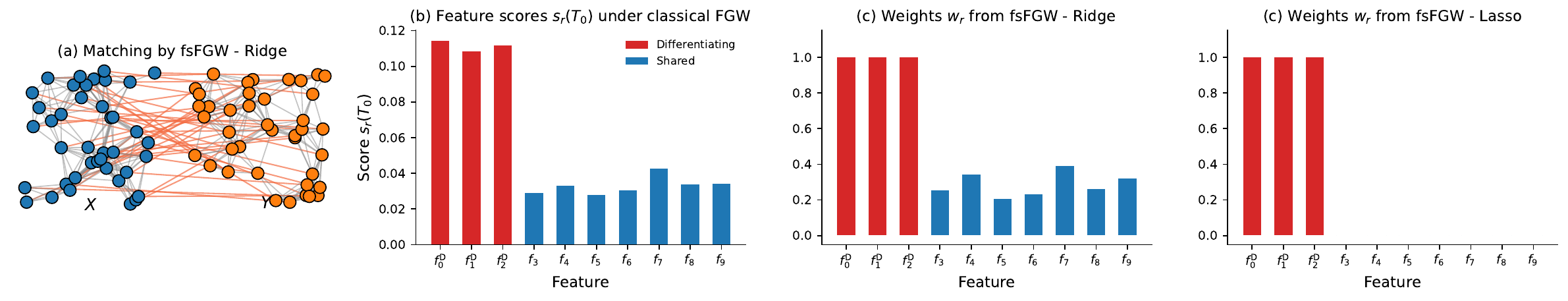}
    \caption{(a) Graphs matched by fsFGW - Ridge. (b) Feature scores $s_r(\textbf{T}_0)$ under classical FGW: differentiating features (red) score $3$--$4\times$ higher than 
    shared features (blue). (c) and (d) fsFGW with Ridge ($f=0.2$) and Lasso ($f=0.3$) successfully suppresses the differentiating features.}
    \label{fig:synthetic_main}
\end{figure}

\textbf{fsFGW can identify differentiating features.}
We start with a low-dimensional instance ($d=10$, $k=3$, $\delta=2.0$) of the toy graphs.
Figure~\ref{fig:synthetic_main}b shows the feature scores $s_r(\textbf{T}_0)$ under the classical FGW plan, obtained with $\textbf{w}=\mathbf{0}$. Differentiating features score $3$--$4\times$ higher than shared features, confirming that the signal is visible to the transport plan before any suppression is applied. Figure~\ref{fig:synthetic_main}c and d shows the recovered suppression weights for Ridge ($f=0.2$) and Lasso ($f=0.3$). Both modes correctly assign high suppression to differentiating features and low suppression to shared ones. Appendix \ref{app:synthetic} shows graph matching (Figure~\ref{fig:matched_graphs}) and learned feature weights (Figure~\ref{fig:exp2_baseline}) of all four modes of fsFGW. Figure~\ref{fig:agreement_heatmap} shows that the four modes of fsFGW generate substantially distinct transport plans from each other, as well as from GW and FGW.

\begin{figure}[t]
\begin{subfigure}[b]{0.5\linewidth}
    \includegraphics[width=\linewidth]{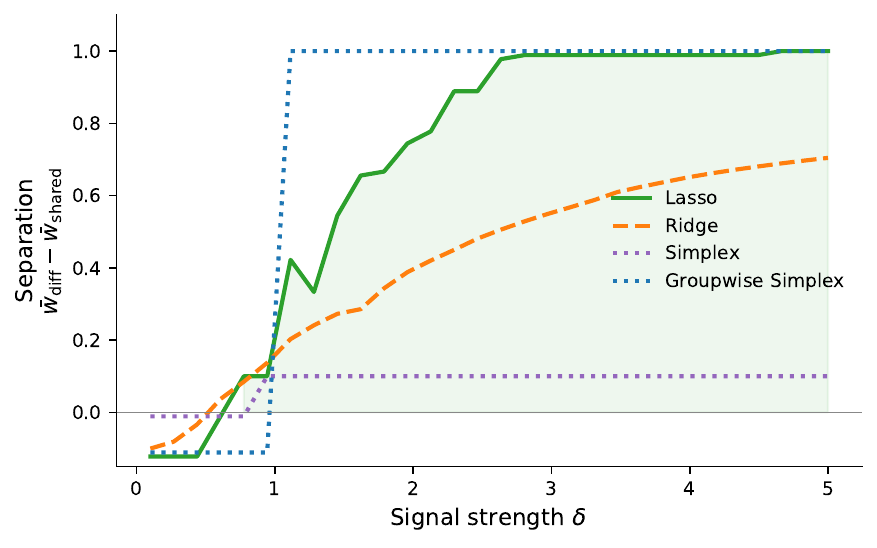}
    \subcaption{Signal strength $\delta$ sweep. Lasso ($f=0.1$) and Groupwise simplex achieves perfect separation above $\delta \approx 3$ and 1, exhibiting a sharp phase transition.}
    \label{fig:exp4_signal}
\end{subfigure}\hfill
\begin{subfigure}[b]{0.35\linewidth}
         \includegraphics[width=\textwidth]{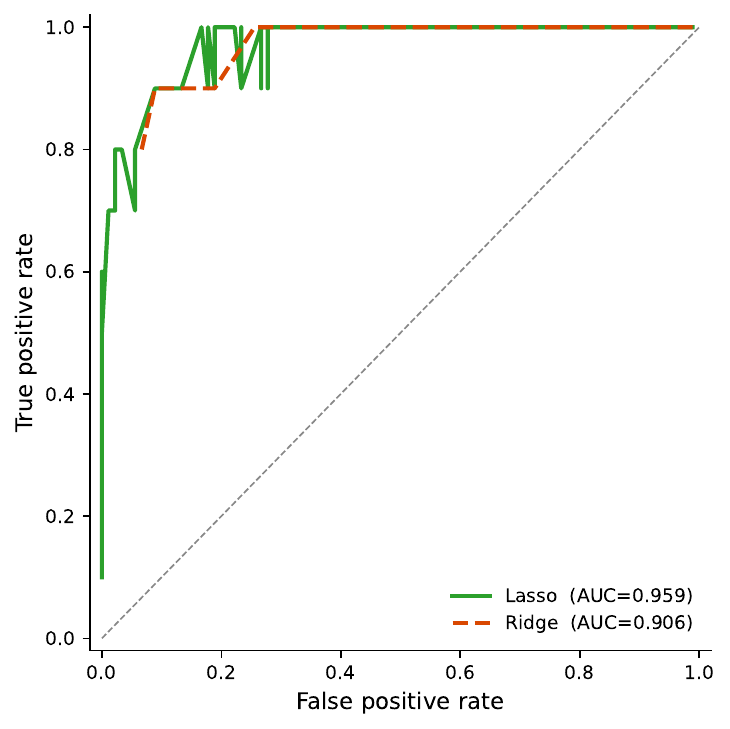}
         \subcaption{ROC curves sweeping suppression fraction $f$. Each point is a separate model run at a different $\lambda$.}
         \label{fig:exp3a_roc_sweep}
\end{subfigure}
\caption{fsFGW performance on toy graphs ($d=100, k=10$) are robust to $\delta$ and $f$ choice.}
\end{figure}

\textbf{Dependence on $\delta$ and $\lambda$.}
For this experiment, we generate a high-dimensional instance ($d=100$, $k=10$) of the toy graphs and sweep the mean shift $\delta \in [0.1,5.0]$. Figure~\ref{fig:exp4_signal} measures the  separation $\bar{w}_{\mathrm{diff}} - \bar{w}_{\mathrm{shared}}$ as a function of $\delta$, where separation $=1$ corresponds to perfect identification. Lasso ($f=0.1$) and Group simplex (given correct grouping) achieves perfect separation above $\delta \approx 3$ and 1, exhibiting a sharp phase transition. Ridge ($f=0.05$) improves continuously but does not reach 1 due to partial suppression of shared features. Simplex is bounded above by $1/k=0.1$ since it suppresses only one feature out of $k=10$. 

Finally, we examine how the choice of $\lambda$ affects recovery quality at $\delta=2.0$. We sweep $f \in (0,1)$ and threshold recovered weights $w_r$ at $0.5$ to obtain binary predictions. Figure~\ref{fig:exp3a_roc_sweep} shows ROC curves for Lasso (AUC $=0.959$) and Ridge (AUC $=0.906$).

\subsection{Graph Classification and Clustering Benchmark Datasets}
\label{sec:benchmark}
\begin{table}[t]
\centering
\resizebox{\textwidth}{!}{
\begin{tabular}{lllccccc}
\toprule
Dataset & Reg & Metric & GW & FGW & $f=0.3$ & $f=0.5$ & $f=0.7$ \\
\midrule
\multirow{8}{2.5cm}{FRANKENSTEIN ($d=780$)} & \multirow{4}{*}{Lasso} & Acc (\%) & 54.5$_{\pm 6.5}$ & 51.0$_{\pm 8.6}$ & 53.0$_{\pm 9.3}$ & 51.5$_{\pm 13.6}$ & \textbf{59.0$_{\pm 9.2}$} \\
 &  & F1 (\%) & 52.8$_{\pm 6.1}$ & 45.0$_{\pm 8.7}$ & 46.3$_{\pm 8.9}$ & 50.0$_{\pm 13.2}$ & \textbf{57.1$_{\pm 9.0}$} \\
 &  & NMI & \textbf{0.017$_{\pm 0.000}$} & 0.013$_{\pm 0.000}$ & 0.000$_{\pm 0.000}$ & 0.000$_{\pm 0.000}$ & \textbf{0.017$_{\pm 0.000}$} \\
 &  & ARI & 0.016$_{\pm 0.000}$ & \textbf{0.017$_{\pm 0.000}$} & -0.005$_{\pm 0.000}$ & -0.003$_{\pm 0.001}$ & 0.016$_{\pm 0.000}$ \\
\cmidrule{2-8}
 & \multirow{4}{*}{Ridge} & Acc (\%) & 54.5$_{\pm 6.5}$ & 51.0$_{\pm 8.6}$ & 56.5$_{\pm 6.7}$ & 53.5$_{\pm 13.2}$ & \textbf{59.0$_{\pm 9.2}$} \\
 &  & F1 (\%) & 52.8$_{\pm 6.1}$ & 45.0$_{\pm 8.7}$ & 53.5$_{\pm 7.3}$ & 52.0$_{\pm 12.9}$ & \textbf{57.1$_{\pm 9.0}$} \\
 &  & NMI & \textbf{0.017$_{\pm 0.000}$} & 0.013$_{\pm 0.000}$ & 0.001$_{\pm 0.000}$ & 0.000$_{\pm 0.000}$ & \textbf{0.017$_{\pm 0.000}$} \\
 &  & ARI & 0.016$_{\pm 0.000}$ & \textbf{0.017$_{\pm 0.000}$} & -0.001$_{\pm 0.000}$ & -0.002$_{\pm 0.001}$ & 0.016$_{\pm 0.000}$ \\
\midrule
\multirow{8}{2.5cm}{PROTEINS\_full ($d=32$)} & \multirow{4}{*}{Lasso} & Acc (\%) & 70.5$_{\pm 9.6}$ & 62.0$_{\pm 7.5}$ & \textbf{71.5$_{\pm 8.1}$} & 67.5$_{\pm 11.5}$ & 62.5$_{\pm 10.1}$ \\
 &  & F1 (\%) & 68.8$_{\pm 10.5}$ & 55.9$_{\pm 11.0}$ & \textbf{69.6$_{\pm 8.6}$} & 65.5$_{\pm 11.7}$ & 60.9$_{\pm 10.4}$ \\
 &  & NMI & 0.068$_{\pm 0.000}$ & \textbf{0.096$_{\pm 0.002}$} & 0.029$_{\pm 0.000}$ & 0.025$_{\pm 0.000}$ & 0.007$_{\pm 0.000}$ \\
 &  & ARI & 0.056$_{\pm 0.000}$ & \textbf{0.104$_{\pm 0.000}$} & 0.026$_{\pm 0.000}$ & 0.022$_{\pm 0.000}$ & 0.007$_{\pm 0.000}$ \\
\cmidrule{2-8}
 & \multirow{4}{*}{Ridge} & Acc (\%) & \textbf{70.5$_{\pm 9.6}$} & 62.0$_{\pm 7.5}$ & 60.0$_{\pm 14.0}$ & 63.5$_{\pm 10.5}$ & 53.0$_{\pm 9.3}$ \\
 &  & F1 (\%) & \textbf{68.8$_{\pm 10.5}$} & 55.9$_{\pm 11.0}$ & 58.3$_{\pm 13.8}$ & 61.7$_{\pm 11.3}$ & 50.7$_{\pm 10.2}$ \\
 &  & NMI & 0.068$_{\pm 0.000}$ & 0.096$_{\pm 0.002}$ & 0.061$_{\pm 0.001}$ & 0.020$_{\pm 0.000}$ & \textbf{0.100$_{\pm 0.004}$} \\
 &  & ARI & 0.056$_{\pm 0.000}$ & 0.104$_{\pm 0.000}$ & 0.062$_{\pm 0.002}$ & 0.019$_{\pm 0.000}$ & \textbf{0.108$_{\pm 0.003}$} \\
\midrule
\multirow{8}{2.5cm}{ogbg-molbace ($d=9$)} & \multirow{4}{*}{Lasso} & Acc (\%) & 76.5$_{\pm 8.4}$ & \textbf{77.0$_{\pm 5.6}$} & \textbf{77.0$_{\pm 4.6}$} & 71.0$_{\pm 5.4}$ & 65.0$_{\pm 8.9}$ \\
 &  & F1 (\%) & \textbf{69.2$_{\pm 10.4}$} & 66.4$_{\pm 9.7}$ & 68.5$_{\pm 6.5}$ & 61.0$_{\pm 10.5}$ & 55.5$_{\pm 8.8}$ \\
 &  & NMI & 0.000$_{\pm 0.001}$ & \textbf{0.023$_{\pm 0.006}$} & 0.003$_{\pm 0.000}$ & 0.004$_{\pm 0.000}$ & 0.013$_{\pm 0.000}$ \\
 &  & ARI & 0.001$_{\pm 0.006}$ & -0.031$_{\pm 0.006}$ & 0.011$_{\pm 0.001}$ & 0.017$_{\pm 0.000}$ & \textbf{0.032$_{\pm 0.000}$} \\
\cmidrule{2-8}
 & \multirow{4}{*}{Ridge} & Acc (\%) & 76.5$_{\pm 8.4}$ & \textbf{77.0$_{\pm 5.6}$} & 63.0$_{\pm 7.8}$ & 68.0$_{\pm 6.0}$ & 69.0$_{\pm 5.8}$ \\
 &  & F1 (\%) & \textbf{69.2$_{\pm 10.4}$} & 66.4$_{\pm 9.7}$ & 53.1$_{\pm 8.8}$ & 58.1$_{\pm 10.3}$ & 61.0$_{\pm 8.8}$ \\
 &  & NMI & 0.000$_{\pm 0.001}$ & \textbf{0.023$_{\pm 0.006}$} & 0.018$_{\pm 0.002}$ & 0.007$_{\pm 0.002}$ & 0.002$_{\pm 0.000}$ \\
 &  & ARI & 0.001$_{\pm 0.006}$ & -0.031$_{\pm 0.006}$ & -0.029$_{\pm 0.001}$ & -0.025$_{\pm 0.003}$ & \textbf{0.010$_{\pm 0.002}$} \\
\bottomrule
\end{tabular}}
\caption{Classification (10-fold CV SVM: accuracy, macro F1) and clustering (10 runs of k-means: NMI, ARI) results. Bold indicates best per row. $f$ denotes suppression fraction.}
\label{tab:benchmark}
\end{table}

Now that we have a better understanding of how fsFGW works, we test it on several benchmark datasets. But first, we clarify that fsFGW, like GW and FGW, is not a classifier. Instead, it is a method for computing an interpretable pairwise distance between graphs that simultaneously identifies which node features drive the dissimilarity. This section is evaluates whether the resulting distance remains a useful dissimilarity that preserves the data structure at least as well as classical FGW and GW.  More experimental details and results are available in Appendix \ref{app:benchmark}.

We evaluate on three datasets. \textsc{Frankenstein}~\cite{orsini2015graph} contains 4{,}337 molecular graphs with 2 classes and high-dimensional node attributes ($d = 780$).  \textsc{Proteins-full}~\cite{borgwardt2005protein} consists of 1{,}113 protein graphs with 2 classes, where nodes represent secondary structure elements annotated with $d = 29$ features. \textsc{ogbg-molbace}~\cite{hu2020open}, consisting of 1{,}513 molecular graphs with 2 classes and $d = 9$ atom-level features (atomic number, chirality, degree, formal charge, number of hydrogens, number of radical electrons, hybridization, aromaticity, ring membership). For each dataset we subsample a stratified subset of 200 graphs and compute the full pairwise distance matrix. We compare GW ($\alpha=1$), FGW ($\alpha=0.5$, $\mathbf{w}=\mathbf{0}$), fsFGW (Lasso), and fsFGW (Ridge) across $f \in \{0.3, 0.5, 0.7\}$. Classification uses a 10-fold CV SVM with RBF kernel; clustering uses $k$-means averaged over 10 seeds. We report accuracy, F1, Normalized Mutual Information (NMI) and Adjusted Rand Index (ARI).

Table~\ref{tab:benchmark} reports results across all three datasets and suppression fractions. The primary takeaway is that fsFGW is \emph{competitive} with GW and classical FGW across all settings, while additionally providing interpretable feature weights.
On \textsc{Frankenstein} ($d=780$), classical FGW underperforms GW on all metrics, which suggests that the structure correlates better with the class labels compared to its features. Feature suppression at $f=0.7$ recovers GW performance.
On \textsc{Proteins-full} ($d=32$), fsFGW (Lasso) at $f=0.3$ matches GW on classification (71.5\% vs 70.5\%) and improves over classical FGW by 9.5\%. fsFGW (Ridge) at $f=0.7$ achieves the highest NMI (0.100) and ARI (0.108), surpassing both GW and FGW on clustering. Lasso is more robust across fractions, while Ridge is more sensitive: Ridge at $f=0.7$ achieves the best clustering but degrades on classification.

On \textsc{ogbg-molbace} ($d=9$), GW and FGW achieve comparable classification accuracy ($\sim$77\%), with fsFGW (Lasso) at $f=0.3$ matching this level. Notably, classical FGW yields negative ARI ($-0.031$), worse than random clustering, while fsFGW (Lasso) recovers positive ARI across all fractions, peaking at $0.032$ for $f=0.7$. Since \textsc{ogbg-molbace} comes with interpretable features, we plot mean suppression weights $\bar{w}_r$ per feature in Figure~\ref{fig:weights_molbace}. Across both Lasso and Ridge, \texttt{hybridization}, \texttt{is\_aromatic}, and \texttt{is\_in\_ring} consistently receive the highest suppression weights, identifying these as the most differentiating atom properties between molecule pairs. In contrast, \texttt{num\_radical\_e} is consistently retained, suggesting this is uniformly distributed across active and inactive compounds. 
This level of interpretability is not available from classical FGW.

\begin{figure}[t]
    \centering
    \includegraphics[width=\linewidth]{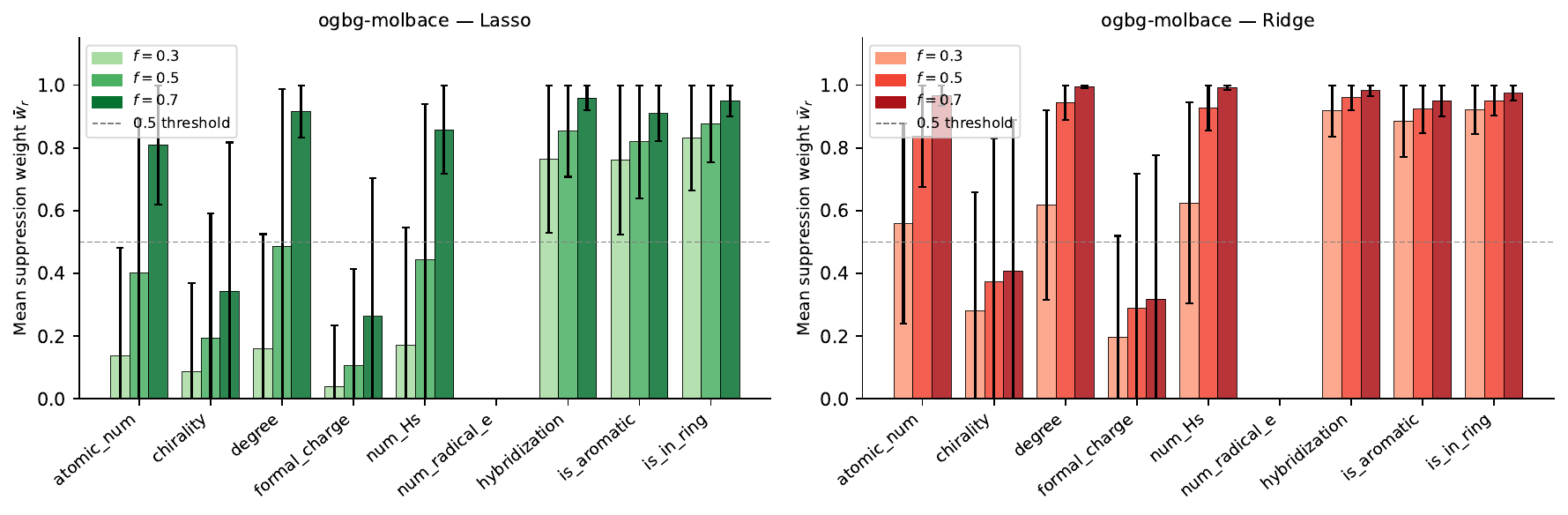}
    \caption{Mean suppression weights $\bar{w}_r$ per feature on \textsc{ogbg-molbace}, averaged over all graph pairs. \{\texttt{hybridization}, \texttt{is\_aromatic}, \texttt{is\_in\_ring}\} are most strongly suppressed across both Lasso and Ridge. Next are \{\texttt{atomic\_num}, \texttt{degree}, \texttt{num\_Hs}\}, then \{\texttt{chirality}, \texttt{formal\_charge}\}. \texttt{num\_radical\_e} is consistently retained.}
    \label{fig:weights_molbace}
\end{figure}

\section{Application to Computational Redistricting}
\label{sec:redistricting}

In the United States, a redistricting \textit{plan} partitions a state into electoral \textit{districts}, each consisting of precincts that elect representatives. Because district boundaries determine how votes translate into representation, redistricting plays a central role in electoral outcomes. This motivates computational redistricting~\cite{duchin2022political}, which models plans as graph partitioning problems.

We represent a district as a metric measure space on a graph $G=(V,E)$, where vertices are precincts, and edges encode geographical adjacency. Each district induces: (i) a structure matrix $\mathbf{C}$ capturing precinct distances on the graph, (ii) node features $\mathbf{X}$ encoding demographic and electoral data at precinct level, and (iii) a probability measure $\mathbf{a}$ over precincts (e.g. normalized population). Thus, each \textit{district} is represented in the FGW framework as $(\mathbf{C}, \mathbf{a}, \mathbf{X})$. To compare \textit{plans}, we match districts using linear sum assignment and aggregate district-level GW, FGW, or fsFGW distances into a plan-level distance as in \cite{clark2025generalized}. Details and the 29 features used are given in Appendix~\ref{app:redistricting}. 

Works that apply Wasserstein~\cite{abrishami2020geometry} and GW~\cite{clark2025generalized} distances to redistricting plans exist. To our knowledge, this is the first geometrically aware metric in computational redistricting that applies FGW as well as provides feature-level attribution of differences between redistricting plans, thereby identifying interpretable drivers of the differences between plans. 
 
We use this setting to evaluate three properties of fsFGW: 
(i) whether features alter plan similarity beyond geometry,
(ii) whether suppression captures meaningful distributed differences, and
(iii) whether differences localize to specific districts.
As a case study, we examine six redistricting plans in North Carolina between 2020 and 2025, a period which spans a court-imposed, bias-constrained plan (2022) and subsequent legislatively enacted plans, including a mid-cycle revision (2025). These maps (Figure~\ref{fig:ncmaps}) provide a natural testbed due to variation in geometry and electoral composition. 

\textbf{Plan similarity beyond geometry} We compute pairwise GW, FGW, and fsFGW distances between plans to compare how each method captures similarities between plans. Figure~\ref{fig:dendro_comparison} shows that GW, which captures only geometry, groups plans with similar spatial layouts regardless of political composition. Incorporating features via FGW reshapes this structure, separating the court-constrained plan (22ct) from its legislative counterpart (22) and drawing legislatively enacted plans (23, 25) closer together. Similarly, fsFGW with Lasso reiterates this pattern. See Figure~\ref{fig:dendrograms} for more results. 
\begin{figure}[ht]
    \centering
    
    \begin{subfigure}[t]{0.33\textwidth}
        \centering
        \includegraphics[width=\textwidth]{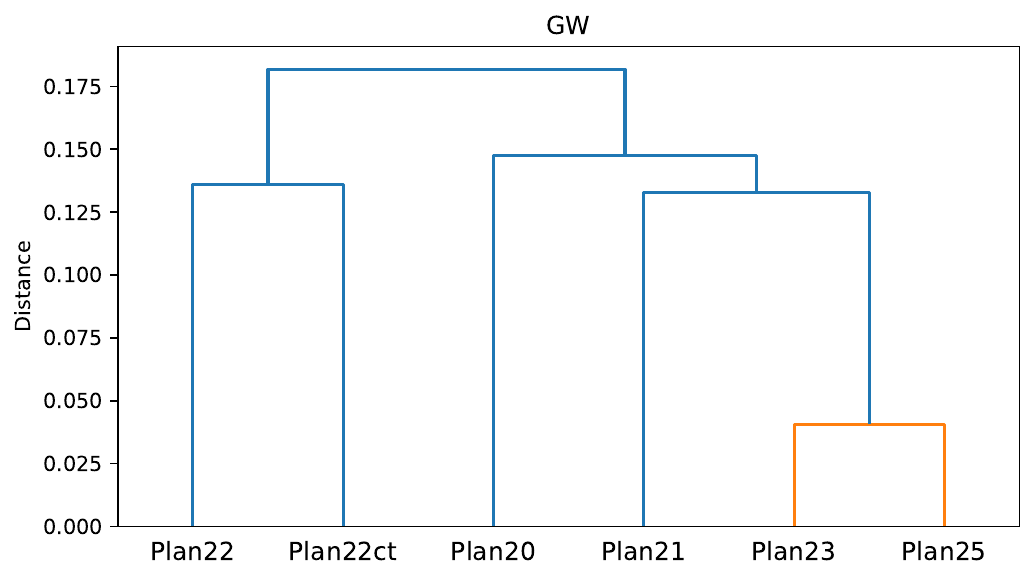}
        \label{fig:dendro_a}
    \end{subfigure}%
    \begin{subfigure}[t]{0.33\textwidth}
        \centering
        \includegraphics[width=\textwidth]{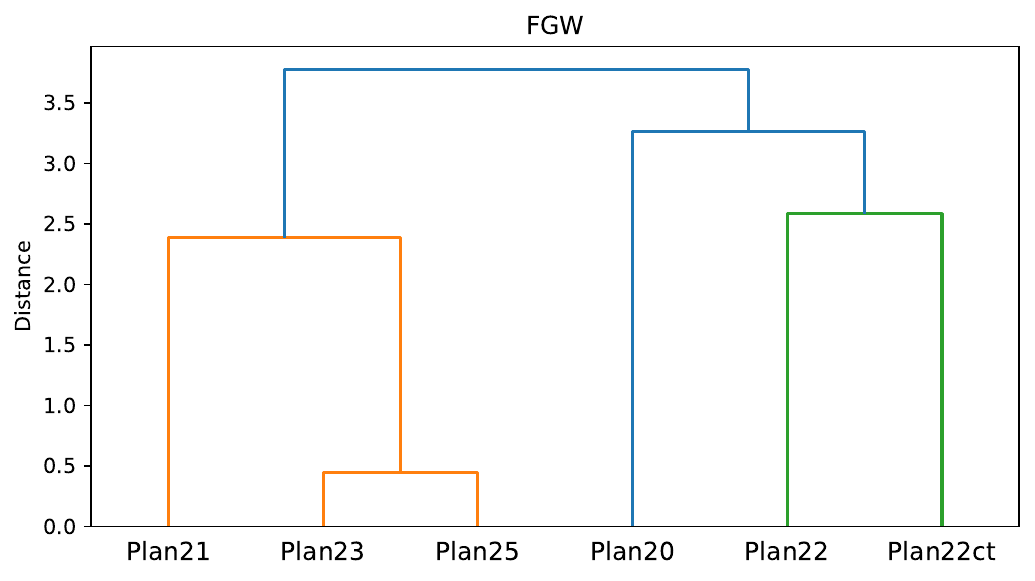}
        \label{fig:dendro_b}
    \end{subfigure}%
    \begin{subfigure}[t]{0.33\textwidth}
        \centering
        \includegraphics[width=\textwidth]{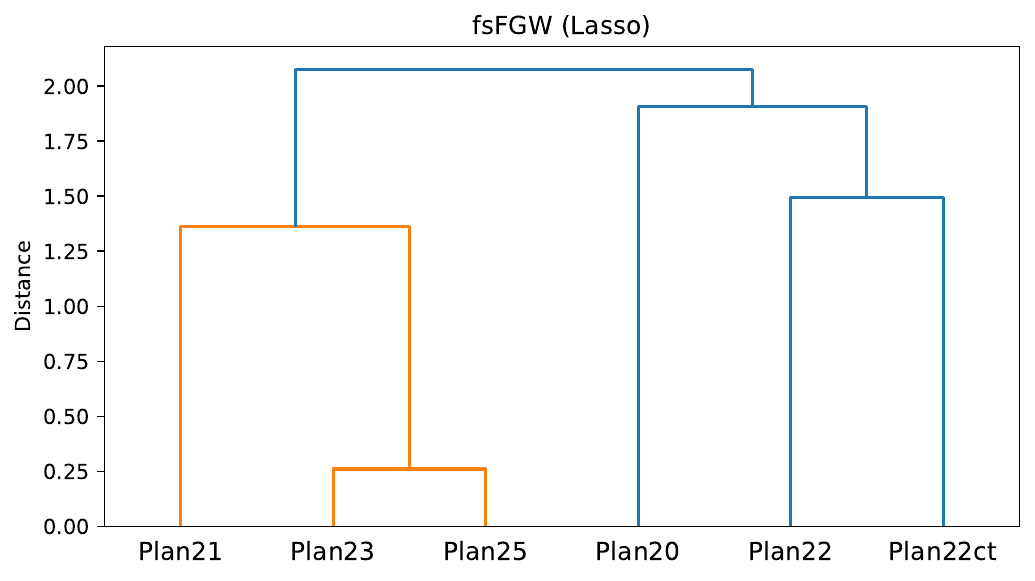}
        \label{fig:dendro_c}
    \end{subfigure}
    
    \caption{Hierarchical clustering of plans with GW, FGW, and fsFGW (Lasso). }
    \label{fig:dendro_comparison}
\end{figure}

\textbf{Feature-level changes across plans} To further understand how fsFGW responds to controlled structural changes, we examine the transition from Plan 22 (Senate Bill 745) to its court-modified version 22ct (\textit{Harper v. Hall}). This setting isolates judicially induced changes, and we visualize the fsFGW (Simplex) suppression weights in Figure~\ref{fig:mean_weights_22_22ct}; the other modes are in Figure~\ref{fig:ncweights22v22ct}. Note that the standard deviation error bar is large because Simplex selects only one feature per district-pair. The suppression weights indicate that the court modification redistributes demographic, political, and socioeconomic characteristics across districts—particularly housing type, racial composition (Black, BVAP), political alignment (G20Dem), and urbanization proxies—consistent with a shift toward reduced partisan bias and increased competitiveness. 

\begin{figure}[htp]
    \centering
    \includegraphics[width=0.75\textwidth]{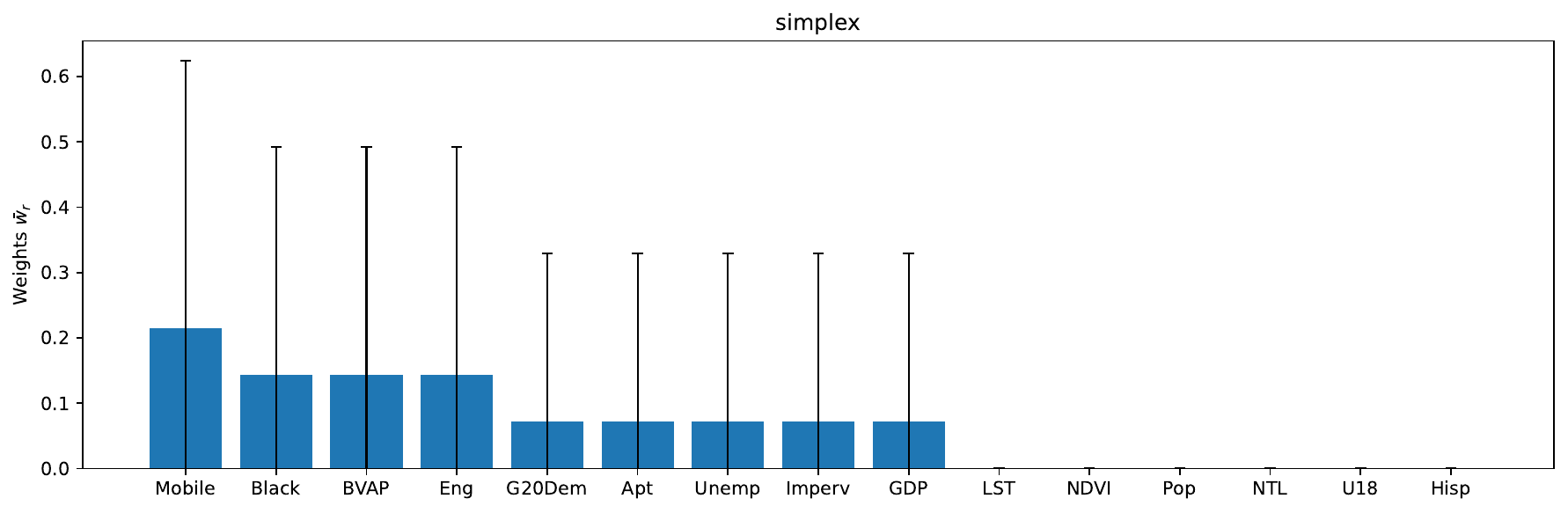}
    \caption{Mean suppression weights for fsFGW (Simplex) when comparing Plan 22 and Plan 22ct.}
    \label{fig:mean_weights_22_22ct}
\end{figure}

\textbf{Localized changes between districts} Finally, we examine a more localized redesign between Plan 23 (Senate Bill 757) and Plan 25 (Senate Bill 249), which differ only in two northeastern districts (c.f. Figure~\ref{fig:ncmaps}). Despite this limited geographic scope, fsFGW reveals systematic shifts in precinct composition aligned with targeted partisan refinement. Figure~\ref{fig:lasso_23_25} shows that nonzero Lasso weights are concentrated in only two district pairs (d0 and d7), while the remaining eleven pairs receive zero weight across all features, confirming that the redistricting change is highly localized. In d0, the active features reflect racial, ethnic, and linguistic minority composition (BVAP, Black, Hisp, Eng) and urbanization level (Mobile, LST). In d7, a broader set of features is active: racial and ethnic composition again appears through BVAP, Black, and Hisp, alongside political alignment (G20Dem), urbanization level (NDVI, LST), and housing characteristics including mobile homes and pre-1980 construction (Mobile, Pre80). Together, these patterns indicate that the revised districts are reconfigured along racial, political, and urbanization dimensions, consistent with targeted adjustments to electoral composition rather than broad structural change.

\begin{figure}[ht]
    \centering
    \includegraphics[width=\textwidth]{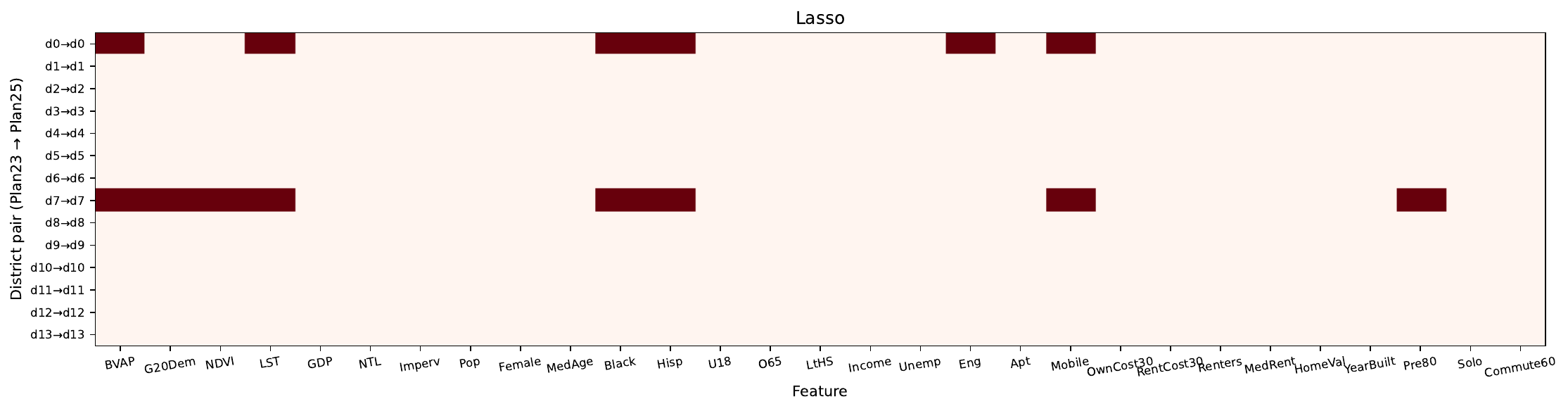}
    \caption{Suppression weights for fsFGW (Lasso) when comparing Plan 23 and Plan 25. Each row is a district pair and column is feature. See Figure~\ref{fig:ncweights23v25} for other modes.}
    \label{fig:lasso_23_25}
\end{figure}

Across these comparisons, fsFGW identifies feature-level differences that track the underlying redistricting process recently in North Carolina: court-imposed changes (22 to 22ct) produce distributed adjustments across districts consistent with bias reduction, while legislative revisions (23 to 25) induce highly localized, targeted shifts that refine partisan advantage.

\section{Discussion and Conclusions}
\label{sec:discussion}

The proposed fsFGW framework extends FGW by embedding feature selection directly into the transport objective, enabling identification of the features that drive dissimilarity under alignment instead of leaning on a uniform aggregation of features. It yields competitive distances relative to GW and FGW while providing interpretable outputs on why structured objects differ. Across synthetic and benchmark datasets, fsFGW emphasizes features with high alignment cost, and in redistricting reveals how geographically similar plans diverge along demographic and electoral dimensions.

\textbf{Limitations} At the same time, fsFGW inherits the non-convexity of FGW: the joint optimization over transport and feature weights may converge to local minima, with runtime dominated by repeated FGW solves, which can limit scalability. Performance also depends on the choice of regularization strength or suppression fraction, and different regularizers trade off sparsity and stability.

\textbf{Broader Impacts} fsFGW is relevant to applications requiring interpretable, feature-level attribution such as graph-structured data, biological networks, and settings such as redistricting, where identifying the features driving differences between plans can support transparency and analysis. However, suppression weights reflect features driving transport cost under a given alignment, not causal explanations, and should be interpreted accordingly.

Overall, fsFGW provides a flexible and principled framework for interpretable, structure-aware comparison in high-dimensional settings, embedding feature selection directly into optimal transport. Future work includes improving scalability and developing adaptive strategies for parameter selection.

\begin{ack}
This project was supported by the National Science Foundation (NSF) Grant No. DMS-2520375, while the authors attended the 2025 Research Collaboration Workshop in the Science of Data and Mathematics (WiSDM) held at University of North Carolina at Chapel Hill on August 4--8, 2025. R. A. Clark would also like to acknowledge the support of the NSF MSP Ascending Postdoc Award No. DMS-2138110. The authors thank Susan Glenn, Kathryn Leonard, Nkechi Nnadi, and Sarah Tymochko for early discussions, and Yifei Lou for organizing the WiSDM workshop.
\end{ack}

\newpage

\bibliography{references}
\bibliographystyle{plain}

\newpage
\appendix
\section{Additional Details for Theoretical Properties of fsFGW Distance}
\label{app:proofs}

With a mild abuse of notation, we will write $\mathrm{fsFGW}^\star(\mathcal{X}, \mathcal{Y})$ to denote the \textit{minimizing} fsFGW distance between objects $\mathcal{X}$ and $\mathcal{Y}$ as in \eqref{eq:fgw-unified} or \eqref{eq:fsfgw_unified_continuous}, and $\mathrm{fsFGW}(\mathbf{T}, \mathbf{w})$ to indicate the value of the fsFGW loss function evaluated at $(\mathbf{T}, \mathbf{w})$. $(\mathcal{X}, \mathcal{Y})$ may be omitted when the meaning is clear. In other words, $\mathrm{fsFGW}^\star = \mathrm{fsFGW}^\star(\mathcal{X}, \mathcal{Y}) = \inf_{T, w} \mathrm{fsFGW}(\mathbf{T}, \mathbf{w})$ given $\mathcal{X}, \mathcal{Y} = \mathrm{fsFGW}(\mathbf{T}^\star, \mathbf{w}^\star)$ given $\mathcal{X}, \mathcal{Y}$. Similar notations apply for GW and FGW.

\subsection{Continuous Formulation of fsFGW for Metric Measure Spaces}
Consistent with GW and FGW, the feature-suppressed FGW problem extends naturally to general metric spaces $(X, d_X)$ and $(Y, d_Y)$ with measures $\mu$ and $\nu$.
The continuous formulation of fsFGW is
\begin{equation}\label{eq:fsfgw_unified_continuous}
\min_{\pi \in \Pi(\mu,\nu),\, w \in \mathcal{W}}
\; (1-\alpha) \sum_{r=1}^d (1-w_r) s_r(\pi) + \alpha \mathrm{GW}(\pi)
+ \lambda R(w),
\end{equation}
where feature scores are defined as
\begin{equation}
    s_r(\pi) = \int_{\mathcal{X} \times \mathcal{Y}} 
\bigl|f_r(x) - g_r(y)\bigr|^q \, \mathrm{d}\pi(x,y)
\end{equation}
and the GW term is
\begin{equation}
    \mathrm{GW}(\pi) = \int_{(\mathcal{X} \times \mathcal{Y})^2}
\left|d_X(x,x') - d_Y(y,y')\right|^q 
\, \mathrm{d}\pi(x,y)\, \mathrm{d}\pi(x',y').
\end{equation}
The continuous and discrete formulations are connected via Corollary~\ref{cor:convergence}.

\subsection{Proof of Theorem \ref{thm:existence}: Existence of fsFGW}

\begin{proof}
Let $U(\mathbf{a},\mathbf{b})$ denote the transport polytope and let $\mathcal{W}$ denote the admissible set of suppression weights. We consider the following three cases:
\[
\mathcal{W} =
\begin{cases}
[0,1]^d & \text{(Lasso / Ridge)},\\
\Delta^{d-1} & \text{(Simplex)},\\
\Delta^{G-1} & \text{(Groupwise simplex)}.
\end{cases}
\]

First, $U(\mathbf{a},\mathbf{b})$ is a nonempty, closed, and bounded subset of $\mathbb{R}^{n \times m}$, hence compact. The set $\mathcal{W}$ is also compact in each case above, since it is either a closed hypercube ($[0,1]^d)$ or a probability simplex ($\Delta^{d-1}$ or $\Delta^{G-1}$). Therefore, the product set $U(\mathbf{a},\mathbf{b}) \times \mathcal{W}$ is compact.

Next, we show that the objective function in \eqref{eq:fgw-unified} is continuous on $U(\mathbf{a},\mathbf{b}) \times \mathcal{W}$. Note that fsFGW introduces a new bilinear feature term:
\[
(1-\alpha)\sum_{r=1}^d (1-w_r)s_r(\mathbf{T}).
\]
This is continuous, since $s_r(\mathbf{T}) = \sum_{i,j} T_{ij}|x_{ir}-y_{jr}|^q$ is linear in $\mathbf{T}$ and $(1-w_r)$ is linear in $\mathbf{w}$, so their product is continuous in $(\mathbf{T},\mathbf{w})$. The GW term $\mathrm{GW}(\mathbf{T})$ is a quadratic polynomial in $\mathbf{T}$ and hence continuous. The regularization term $\lambda R(\mathbf{w})$ is continuous for both Lasso and Ridge, and vanishes in the simplex-constrained cases.

Thus, the objective function is continuous on a compact set. By the extreme value theorem, it attains its minimum on $U(\mathbf{a},\mathbf{b}) \times \mathcal{W}$. It follows that there exists a minimizer $(\mathbf{T}^\star, \mathbf{w}^\star)$.
\end{proof}

\subsection{Proof of Theorem \ref{thm:bounds}: Bounds on fsFGW distance}

\textbf{Lower bound.}
Let $(\textbf{T}^\star, \textbf{w}^\star)$ be an optimal solution of the fsFGW problem~\eqref{eq:fgw-unified}. For all four modes, 
\begin{equation}
       \mathrm{fsFGW}^\star = 
(1-\alpha)\sum_{r=1}^d (1-w^\star_r)\,s_r(\mathbf{T}^\star)
+
\alpha\,\mathrm{GW}(\mathbf{T}^\star)
+
\lambda R(\mathbf{w}^\star)
\ge \alpha\,\mathrm{GW}(\mathbf{T}^\star) \ge \alpha \mathrm{GW}^\star.
\end{equation}

\textbf{Upper bound.}
\emph{Lasso and Ridge} follows by evaluating at $(\mathbf{T}^\star_{FGW}, \mathbf{0})$ being a feasible point.
\begin{equation}
       \mathrm{fsFGW}^\star  \le \mathrm{fsFGW}(\mathbf{T}^\star_{FGW}, \mathbf{0}) = \mathrm{FGW}^\star.
\end{equation}

For \emph{Simplex and Groupwise Simplex}, we instead consider $\mathbf{w}^{\dagger} = e_{r^\star}$ where  $r^\star \in \arg\min_r s_r(\mathbf{T}^\star_{\mathrm{FGW}})$ is the least differentiating feature under the classical FGW plan. Then $\mathbf{w}^{\dagger} \in \Delta^{d-1}$ and $\lambda = 0$,  giving
\begin{equation}
     \mathrm{fsFGW}^\star \le  \mathrm{fsFGW}(\mathbf{T}^\star_{FGW}, \mathbf{w}^\dagger) = \mathrm{FGW}^\star - (1-\alpha)\,s_{r^\star}(\mathbf{T}^\star_{\mathrm{FGW}}) \le \mathrm{FGW}^\star.
\end{equation}

\subsection{Convergence of Finite Samples} 
We start by discussing what \cite{vayer2020fused} did for FGW. 
From the continuous formulation of FGW on metric space $(X, d_X)$ with measure $\mu$, consider the following. We can sample from the joint distribution to obtain a sequence of empirical measures for $n\in \mathbb{N}$
\begin{equation}
   \mathcal{X}_n=  (\{x_i\}_{i=1}^n,\, d_{X},\, \mu_n ), \label{eq:sequences}
\end{equation}
where
\begin{equation}
\mu_n = \frac{1}{n} \sum_{i=1}^n \delta_{a_i},
\qquad (x_i,a_i) \sim \mu.
\end{equation}
The authors asked whether this sequence converges to $(X, d_X,\mu)$ in the FGW sense for $q=1$, i.e.
\begin{equation}
    \lim_{n\to \infty} \mathrm{FGW}^\star(\mathcal{X}_n, \mathcal{X})=0
\end{equation}
and at what rate this convergence occurs. \cite[Theorem 2]{vayer2020fused} uses $\dim_p^*(\mu)$, its \emph{upper Wasserstein dimension}, to characterize the convergence rate. They state that when the measure is sufficiently regular, this quantity coincides with the intuitive notion of dimension. We refer the reader to \cite{vayer2020fused} and references within for more details.

The bounds in Thoerem \ref{thm:bounds} give a straightforward extension of \cite[Theorem 2]{vayer2020fused} to our fsFGW distance.

\begin{corollary}[Convergence of finite samples]
\label{cor:convergence}
For $q=1$, the sequence of structured objects in \eqref{eq:sequences} converges in the $\mathrm{fsFGW}$ sense, meaning
\begin{equation}
    \lim_{n\to\infty} \mathrm{fsFGW}^\star(\mathcal{X}_n, \mathcal{X}) = 0.
\end{equation} 
Moreover, 
\begin{equation}
    \mathbb{E}[\mathrm{fsFGW}^\star(\mathcal{X}_n, \mathcal{X})] \le Cn^{-1/s}
\end{equation}
for $s>\dim_p^*(\mu)$.
\end{corollary}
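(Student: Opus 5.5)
The argument is a sandwich. We combine the lower bound $\mathrm{fsFGW}^\star \ge 0$, which follows from nonnegativity of every term in \eqref{eq:fsfgw_unified_continuous}, with the upper bound $\mathrm{fsFGW}^\star \le \mathrm{FGW}^\star$ from Theorem~\ref{thm:bounds}, applied to the pair $(\mathcal{X}_n,\mathcal{X})$. This gives, for every $n$ and every sample realization,
\begin{equation*}
0 \;\le\; \mathrm{fsFGW}^\star(\mathcal{X}_n,\mathcal{X}) \;\le\; \mathrm{FGW}^\star(\mathcal{X}_n,\mathcal{X}).
\end{equation*}
We then invoke \cite[Theorem 2]{vayer2020fused} for $q=1$. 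It states that $\mathrm{FGW}^\star(\mathcal{X}_n,\mathcal{X})\to 0$ and that $\mathbb{E}[\mathrm{FGW}^\star(\mathcal{X}_n,\mathcal{X})]\le C n^{-1/s}$ for $s>\dim_p^*(\mu)$. Squeezing gives the almost sure (or in-probability, matching whatever mode Vayer et al.\ state) limit. Taking expectations of the sandwich, which is legitimate by monotonicity of expectation for nonnegative random variables, gives the rate with the same constant $C$.

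The main step to check is that Theorem~\ref{thm:bounds} really transfers from the discrete setting to the continuous formulation \eqref{eq:fsfgw_unified_continuous}, since $\mathcal{X}$ is a general metric measure space. The proof of the upper bound only needs feasibility of a test point.

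\textbf{Lasso and Ridge.} Take $(\pi,\mathbf{0})$ with $\pi$ any coupling in $\Pi(\mu_n,\mu)$. Its objective equals the FGW objective at $\pi$, because $R(\mathbf{0})=0$. Taking the infimum over $\pi$ yields $\mathrm{fsFGW}^\star\le \mathrm{FGW}^\star$, so no optimal FGW coupling is needed.

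\textbf{Simplex modes.} Take any $\mathbf{w}=\mathbf{e}_r$. The objective at $(\pi,\mathbf{e}_r)$ is at most the FGW objective at $\pi$, since $s_r(\pi)\ge 0$.

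Both arguments avoid the existence question in the continuous setting, because we work with infima throughout. I will also remark that the relevant FGW here uses $p=1$ and the same $\alpha$ and $q$ as fsFGW. The finiteness of $s_r(\pi)$ and $\mathrm{GW}(\pi)$ is inherited from the assumptions of \cite[Theorem 2]{vayer2020fused}, namely bounded or finite-moment spaces.

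The only real obstacle is bookkeeping rather than mathematics. The sampling model in \eqref{eq:sequences} must coincide with the one in \cite{vayer2020fused}, with features $a_i$ drawn jointly with $x_i$. The constant $C$ and the dimension $\dim_p^*(\mu)$ must refer to the joint measure on structure$\times$feature space, exactly as in that reference. Once that alignment is stated explicitly, the corollary is immediate for all four modes.
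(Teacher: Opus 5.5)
Your proposal is correct and follows the paper's own sandwiching argument: it combines the upper bound $\mathrm{fsFGW}^\star\le\mathrm{FGW}^\star$ from Theorem~\ref{thm:bounds} with the FGW convergence result \cite[Theorem 2]{vayer2020fused}. The only differences are that you use the trivial lower bound $0$ instead of the paper's $\alpha\,\mathrm{GW}^\star$, which makes the paper's appeal to convergence in the GW sense unnecessary, and that you check via infima (rather than optimal plans) that the upper bound carries over to the continuous formulation, a step the paper leaves implicit.
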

Proof directly follows from a sandwiching argument. Because the sequence of finite samples converges in both the FGW sense and in the GW sense, fsFGW does too.

\subsection{Proof of Theorem \ref{thm:metric}: Metric Properties}

Showing \textbf{Positivity} and \textbf{Symmetry} is trivial.

\textbf{Identity of indiscernibles.}

For \textit{Lasso} and \textit{Ridge}, if $\mathrm{fsFGW}^\star(\mathcal{X}, \mathcal{Y}) = 0$, due to non-negativity of every term in \eqref{eq:fgw-unified}, it must be that $\lambda R(\mathbf{w}^\star)=0$. For Lasso and Ridge, $\|\mathbf{w}^\star\|_1$ and $\|\mathbf{w}^\star\|_2^2$ are 0 only when $\mathbf{w}^\star = \mathbf{0}$. So $\mathrm{fsFGW}^\star$ reduces to the classical FGW, which is shown to satisfy the identity of indiscernibles in this direction in~\cite[Theorem 1]{vayer2020fused}.

In the other direction, we wish to show that $\mathcal{X}=\mathcal{Y} \implies \mathrm{fsFGW}^\star(\mathcal{X}, \mathcal{Y})=0$. When $\mathcal{X}=\mathcal{Y}$, $\mathrm{fsFGW}(\mathbf{I}, \mathbf{0})=0$ can be achieved. Then due to non-negativity of individual terms in \eqref{eq:fgw-unified}, $0 \le \mathrm{fsFGW}^\star \le \mathrm{fsFGW}(\mathbf{I}, \mathbf{0}) = 0 \implies \mathrm{fsFGW}^\star =0$.

For \textit{Simplex} and \textit{Groupwise Simplex}, one can construe a counter-example. Two graphs that differ on the suppressed feature(s) but agree structurally and on all other features satisfy $\mathrm{fsFGW}^\star = 0$ without being isomorphic.

\textbf{Triangle inequality.}

We prove the triangle inequality for fsFGW with Lasso and Ridge regularization. 

\textit{Step 1: Reduce to optimization over $\mathbf{T}$ only.}

Note that for fixed $\mathbf{T}$, the objective is separable in $\mathbf{w}$, so we can minimize over $\mathbf{w}$ pointwise, yielding an equivalent problem over $\mathbf{T}$ only. Also, observe the optimal \underline{Lasso} weights are given in closed form by $w_r^\star = \mathbf{1}[(1-\alpha)s_r(\mathbf{T}) > \lambda]$. Substituting into the full objective:

\begin{align*}
&(1-\alpha)\sum_r (1-w_r^\star)s_r(\mathbf{T}) + \alpha\,\mathrm{GW}(\mathbf{T}) + \lambda\sum_r w_r^\star \\
&= \sum_r \left[(1-\alpha)(1-w_r^\star)s_r(\mathbf{T}) + \lambda w_r^\star\right] + \alpha\,\mathrm{GW}(\mathbf{T})\\
&= \sum_r \begin{cases} (1-\alpha)s_r(\mathbf{T}) & \text{if } (1-\alpha)s_r(\mathbf{T}) \leq \lambda \\ \lambda & \text{if } (1-\alpha)s_r(\mathbf{T}) > \lambda \end{cases} + \alpha\,\mathrm{GW}(\mathbf{T})\\
&= \sum_r \min\left((1-\alpha)s_r(\mathbf{T}), \lambda\right) + \alpha\,\mathrm{GW}(\mathbf{T}).
\end{align*}

For fixed $\mathbf{T}$, the optimal \underline{Ridge} weights are $w_r^\star = \min\left(1, \frac{(1-\alpha)s_r(\mathbf{T})}{\lambda}\right)$. Substitute into the full objective and consider the terms related to $r$:

Case 1: $(1-\alpha)s_r(\mathbf{T}) \leq \lambda$, so $w_r^\star = \frac{(1-\alpha)s_r(\mathbf{T})}{\lambda}$:
\begin{align*}
(1-\alpha)(1-w_r^\star)s_r(\mathbf{T}) + \frac{\lambda}{2}w_r^{\star2} &= (1-\alpha)\left(1 - \frac{(1-\alpha)s_r(\mathbf{T})}{\lambda}\right)s_r(\mathbf{T}) + \frac{\lambda}{2}\left(\frac{(1-\alpha)s_r(\mathbf{T})}{\lambda}\right)^2\\
&= (1-\alpha)s_r(\mathbf{T}) - \frac{(1-\alpha)^2 s_r(\mathbf{T})^2}{2\lambda}
\end{align*}

Case 2: $(1-\alpha)s_r(\mathbf{T}) > \lambda$, so $w_r^\star = 1$:
$$(1-\alpha)(1-1)s_r(\mathbf{T}) + \frac{\lambda}{2}\cdot 1 = \frac{\lambda}{2}$$

So summarizing both types of regularization, we have:
\begin{equation}
   \mathrm{fsFGW}^\star(\mathcal{X},\mathcal{Z}) = \min_{\mathbf{T} \in U(\mathbf{a},\mathbf{c})} \left[\sum_r g(s_r(\mathbf{T})) + \alpha\,\mathrm{GW}(\mathbf{T})\right], \label{eq:only_T} 
\end{equation}
where 
\begin{equation}
    g(s) = \min\left((1-\alpha)s, \lambda\right) \label{eq:g_lasso}
\end{equation}
for Lasso, and 
\begin{equation}
    g(s) = \begin{cases} (
    1-\alpha)s - \frac{(1-\alpha)^2 s^2}{2\lambda} & \text{if } (1-\alpha)s \leq \lambda \\ 
    \frac{\lambda}{2} & \text{if } (1-\alpha)s > \lambda \end{cases} \label{eq:g_ridge}
\end{equation}
for Ridge.

\textit{Step 2: Construct a feasible transport plan via gluing.}

Let $\mathbf{T}_1^\star \in U(\mathbf{a},\mathbf{b})$ and $\mathbf{T}_2^\star \in U(\mathbf{b},\mathbf{c})$ be optimal transport plans for $\mathrm{fsFGW}^\star(\mathcal{X},\mathcal{Y})$ and $\mathrm{fsFGW}^\star(\mathcal{Y},\mathcal{Z})$ respectively after the reduction in Step 1. By the Gluing Lemma, there exists a coupling $\pi$ on $X \times Y \times Z$ with marginals $\mathbf{T}_1^\star$ on the first two coordinates and $\mathbf{T}_2^\star$ on the last two coordinates. Let $\mathbf{T}_3$ denote the marginal of $\pi$ on $X \times Z$. By construction $\mathbf{T}_3 \in U(\mathbf{a},\mathbf{c})$.

\textit{Step 3: Bound $s_r(\mathbf{T}_3)$ and $\mathrm{GW}(\mathbf{T}_3)$.}
By the same argument as Proposition 4 in \cite{vayer2020fused},
\begin{align}
s_r(\mathbf{T}_3) &\leq C_q\left(s_r(\mathbf{T}_1^\star) + s_r(\mathbf{T}_2^\star)\right),\\
\mathrm{GW}(\mathbf{T}_3) &\leq C_q\left(\mathrm{GW}(\mathbf{T}_1^\star) + \mathrm{GW}(\mathbf{T}_2^\star)\right)
\end{align}
for $C_q=1$ when $q=1$ and $C_q=2^{q-1}$ for $q\ge 2$.

\textit{Step 4: Bound the feature term.}

We wish to show that for $C_q\ge 1, \lambda>0, s\ge 0, 1-\alpha \ge 0$, the $g$ defined in \eqref{eq:g_lasso} and \eqref{eq:g_ridge} satisfy
\begin{align}
    g(s) &\le g(C_q(s_1 + s_2))\\
    &\le C_q ~g(s_1+ s_2)\\
    &\le C_q ~g(s_1) + C_q ~ g(s_2).
\end{align}
Because $g$ is monotonically non-decreasing, it produces the first inequality when applied to the bound from Step 3. Second and third inequalities follow from checking that $g$ is concave with $g(0) = 0$. For both Lasso and Ridge, the function $g$ is concave, nondecreasing, and satisfies $g(0)=0$. For Lasso, $g(s)=\min((1-\alpha)s,\lambda)$ is the minimum of affine functions; for Ridge, $g$ is piecewise concave with a quadratic segment followed by a constant region. More precisely,
\begin{itemize}
    \item  The second inequality follows from Jensen's inequality applied to the concave function $g$: since $C_q \geq 1$, we have $\frac{1}{C_q} \in [0,1]$ and $s = \frac{1}{C_q}(C_q s) + (1-\frac{1}{C_q})(0)$, so by concavity:
$$g(s) \geq \frac{1}{C_q}g(C_q s) + \left(1-\frac{1}{C_q}\right)g(0) = \frac{1}{C_q}g(C_q s)$$ 
rearranging gives $g(C_q s) \leq C_q g(s)$.

\item Since $g$ is concave with $g(0)=0$, it is subadditive, i.e.,
\[
g(s_1+s_2) \le g(s_1) + g(s_2).
\] For any $s_2 \ge 0$, the third inequality follows from:
$$g(s_1+s_2) - g(s_2) \leq g(s_1+0) - g(0) = g(s_1)$$
rearranging gives subadditivity.
\end{itemize}

Both properties hold for Lasso and Ridge since in each case $g$ is concave, nondecreasing, and satisfies $g(0)=0$.

\textit{Step 5: Combine the bounds.}

By feasibility of $\mathbf{T}_3$ for $\mathrm{fsFGW}^\star(\mathcal{X},\mathcal{Z})$, and using Steps 3 and 4:

\begin{align*}
\mathrm{fsFGW}^\star(\mathcal{X},\mathcal{Z}) &\leq \sum_r g(s_r(\mathbf{T}_3))+ \alpha\,\mathrm{GW}(\mathbf{T}_3)\\
&\leq C_q\sum_r\left(g(s_r(\mathbf{T}^\star_1))+g(s_r(\mathbf{T}^\star_2))\right) + \alpha C_q\left(\mathrm{GW}(\mathbf{T}_1^\star) + \mathrm{GW}(\mathbf{T}_2^\star)\right)\\
&= C_q\left(\mathrm{fsFGW}^\star(\mathcal{X},\mathcal{Y}) + \mathrm{fsFGW}^\star(\mathcal{Y},\mathcal{Z})\right).
\end{align*}
This gives the (relaxed) triangle inequality as claimed. This highlights the importance of Step 1: the triangle inequality holds because the joint optimization over $(\mathbf{T},\mathbf{w})$ reduces to a separable concave transformation of feature costs.

\section{Algorithm and Convergence}
\label{app:convergence}

\begin{algorithm}[htp]
\caption{Feature-Selected Fused Gromov--Wasserstein (fsFGW)}
\label{alg:sfgw}
\begin{algorithmic}[1]
\Require
Structure cost matrices $\mathbf{C}_X, \mathbf{C}_Y$,
feature cost matrices $\mathbf{M}_r = [|x_{ir}-y_{jr}|^q]_{i,j}$ for $r\in[1,d]$,
distributions $\mathbf{a}, \mathbf{b}$,
trade-off $\alpha \in (0,1)$,
suppression mode (Lasso, Ridge, Simplex or Group Simplex),
suppression parameter $f$ or $\lambda$. For Group Simplex, group assignments $\{\mathcal{G}_1, \ldots, \mathcal{G}_G\}$.

\State Initialize $\mathbf{w}^{(0)} = \mathbf{0}$, $k \gets 0$
\State Compute initial transport $\mathbf{T}^{(0)}$ by solving fsFGW with $\mathbf{w}^{(0)}$, which is equivalent to classical FGW. 

\If{$\lambda$ not provided}
    \State Compute feature scores $s_r(\mathbf{T}^{(0)})$ for $r=1,\dots,d$
    \State Set $\lambda = (1-\alpha)\, Q_f\!\left(s_1(\mathbf{T}^{(0)}),\dots,s_d(\mathbf{T}^{(0)})\right)$
\EndIf

\Repeat

    \State \textbf{Weight update:}
    \State Compute feature scores $s_r(\mathbf{T}^{(k)})$

    \If{Lasso}
        \State $w_r^{(k+1)} \gets \mathbb{I}\!\left[(1-\alpha)s_r(\mathbf{T}^{(k)}) > \lambda \right]$
    \ElsIf{Ridge}
        \State $w_r^{(k+1)} \gets \min\!\left\{1,\; \frac{(1-\alpha)s_r(\mathbf{T}^{(k)})}{\lambda}\right\}$
    \ElsIf{Simplex}
        \State $w^{(k+1)} \gets \mathbf{e}_{r^\star}$, where $r^\star = \arg\max_r s_r(\mathbf{T}^{(k)})$
    \ElsIf{Group Simplex}
        \State $g^\star \gets \arg\max_g \sum_{r \in \mathcal{G}_g} s_r(\mathbf{T}^{(k)})$
        \State $w^{(k+1)} \gets \mathbf{e}_{g^\star}$
    \EndIf

    \State \textbf{Transport update:}
    \State Compute $\mathbf{T}^{(k+1)}$ by solving classical FGW with weighted feature cost
    \[
    \mathbf{M}^{(k)} = \sum_{r=1}^d (1-w_r^{(k)})\,\mathbf{M}_{r}.
    \]

    \State $k \gets k+1$
\Until{convergence}

\Return $\mathbf{T}^{(k)}, \mathbf{w}^{(k)}$
\end{algorithmic}
\end{algorithm}

\begin{figure}
    \centering
    \includegraphics[width=\linewidth]{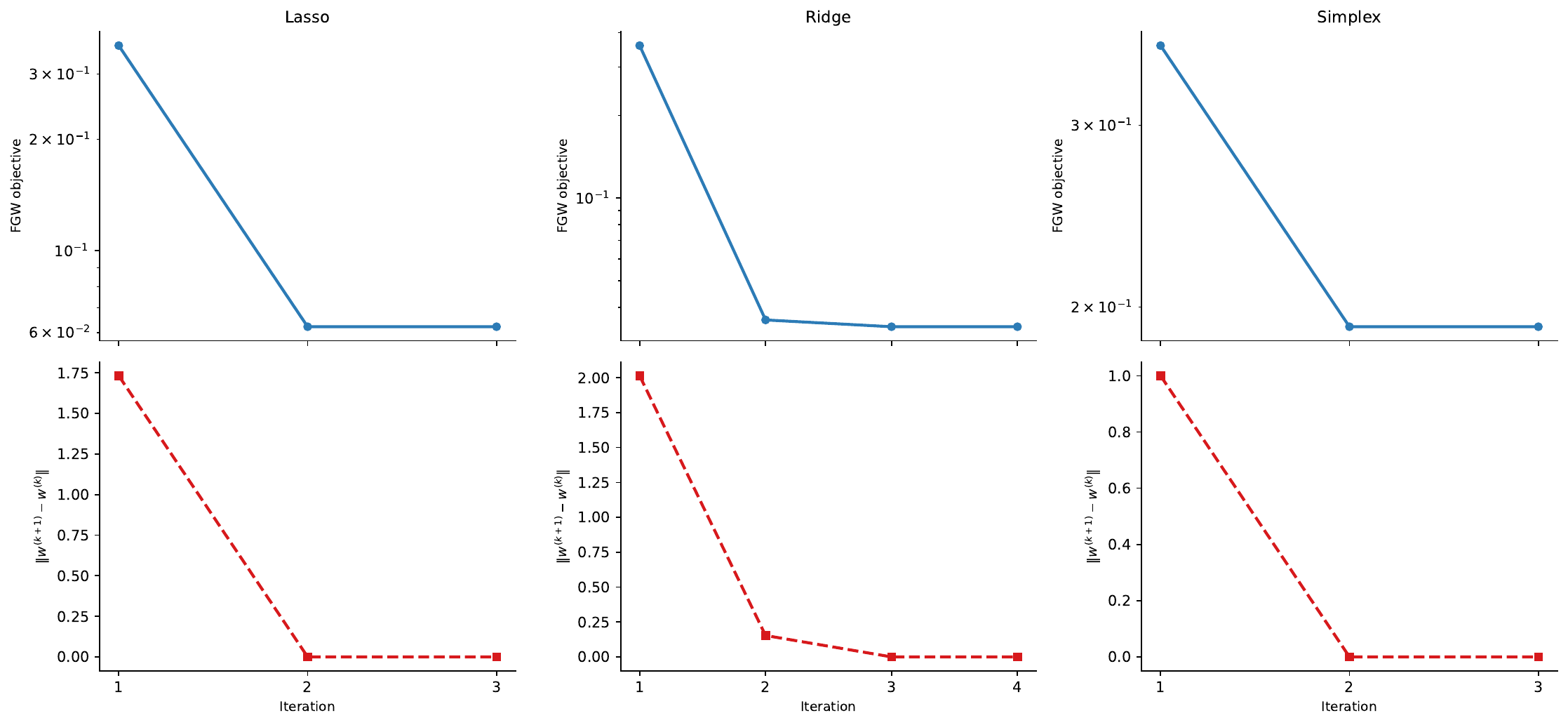}
    \caption{Convergence of fsFGW with $f=0.3$ on two graphs (Graphs 20 and 26) from \textsc{ogbg-molbace}. Top: FGW objective per iteration. Bottom: weight change $\|\mathbf{w}^{k+1} - \mathbf{w}^k\|$. All modes converged in 2-8 iterations in our observations.}
    \label{fig:convergence}
\end{figure}

Our fsFGW algorithm is summarized in Algorithm~\ref{alg:sfgw}. Figure~\ref{fig:convergence} shows the FGW objective and weight change $\|\mathbf{w}^{(k+1)} - \mathbf{w}^{(k)}\|$ per iteration on two graphs (Graphs 20 and 26) from \textsc{ogbg-molbace}. All three modes converge in fewer than five iterations, with Lasso and Simplex converging fastest due to their binary updates. The objective drops steeply from iteration~0 to~1 --- the first $\mathbf{w}$-update after the classical FGW initialization produces a large improvement --- and then flattens.  Most runs that we observed converged in 2-8 steps. The per-iteration cost is dominated by the inner FGW solve; the closed-form $\mathbf{w}$-update adds negligible overhead regardless of $d$. 

All experiments in this paper, including application in computational redistricting, was run on free-tier Google Colab using Intel(R) Xeon(R) CPU @ 2.20GHz with 12GB memory. All GW and FGW pairwise distances are computed using the Python Optimal Transport library~\cite{flamary2021pot,flamary2024pot}.

\section{Additional Experiment Details: Synthetic Data}
\label{app:synthetic}
The low-dimensional instance uses $d=10$ features, $k=3$ differentiating, $d-k=7$ shared, and $n=40$ nodes. The high-dimensional instance uses $d=100$ features, $k=10$ differentiating, $d-k=90$ shared, and $n=40$ nodes. Feature cost matrices $\mathbf{M}_{r}$ are normalized per-feature to $[0,1]$. Lambda is calibrated at each instance using the initial FGW plan $\mathbf{T}_0$.

\textbf{Low-dim: Identification of Differentiating Features}
\begin{figure}[htp]
    \centering
    \includegraphics[width=\linewidth]{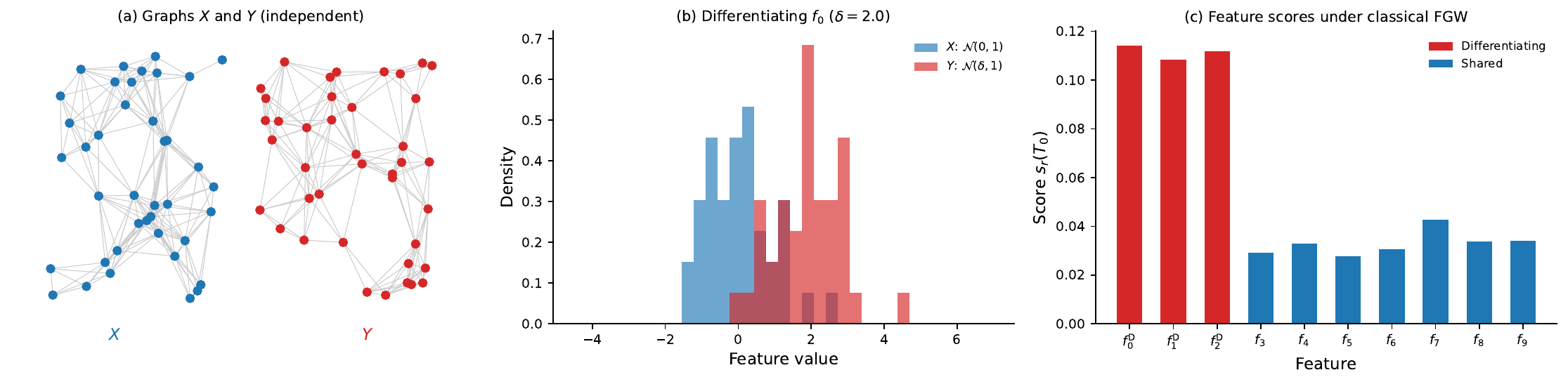}
    \caption{(a) The two independent random geometric graphs $X$ (blue) and $Y$ (red). (b) A differentiating feature: the distributions in $X$ and $Y$ are separated. (c) Feature scores $s_r(\mathbf{T}_0)$ under classical FGW: differentiating features (red) score $3$--$4\times$ higher than shared features (blue).}
    \label{fig:exp1_setup}
\end{figure}

\begin{figure}[htp]
    \centering
    \includegraphics[width=\linewidth]{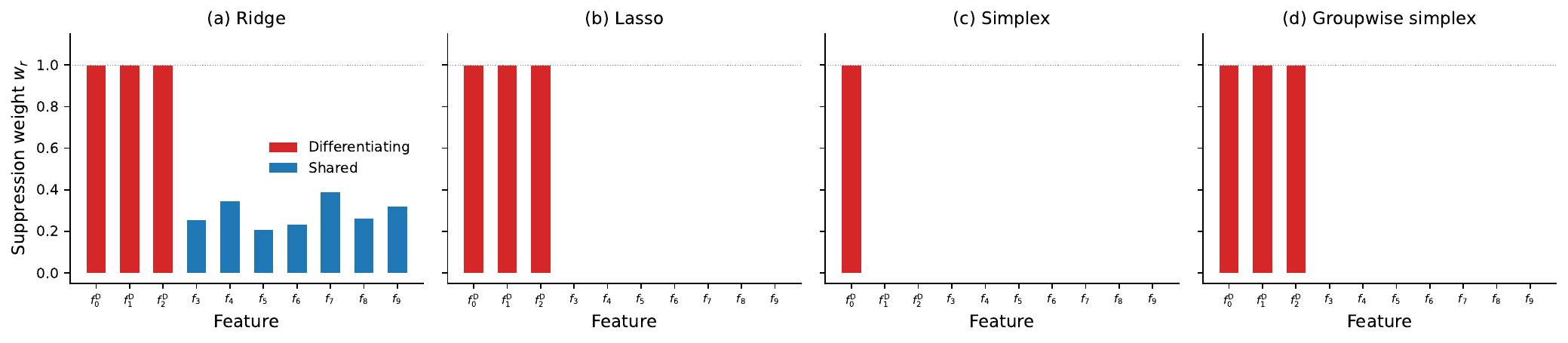}
    \caption{Suppression weights $w_r$ recovered by each mode ($n=40$, $d=10$, $k=3$, $\delta=2.0$). Differentiating features (red) should have $w_r=1$; shared (blue) should have $w_r=0$. Ridge ($f=0.2$) gives continuous weights; Lasso ($f=0.3$) gives binary weights recovering exactly the $k=3$ differentiating features. Simplex concentrates all  suppression on the single highest-scoring feature. Groupwise simplex suppresses the entire group $\mathcal{G}_0$ containing the differentiating features.}
    \label{fig:exp2_baseline}
\end{figure}

Figure~\ref{fig:exp1_setup} shows the two graphs and the feature score distribution under the classical FGW plan $\mathbf{T}_0$, obtained by solving~\eqref{eq:fgw} with $\mathbf{w}=\mathbf{0}$. Differentiating features score $3$--$4\times$ higher than shared features, confirming the signal is visible before any suppression is applied.

Figure~\ref{fig:exp2_baseline} shows the recovered suppression weights $w_r$ for all four modes on a low-dimensional instance ($d=10$, $k=3$, $\delta=2.0$). \textbf{Lasso} ($f=0.3$) recovers the exact differentiating set $\{f_0,f_1,f_2\}$ with binary weights and no false positives or negatives. \textbf{Ridge} ($f=0.2$) gives continuous weights: differentiating features saturate near $w_r=1$ while shared features receive partial suppression proportional to their scores. \textbf{Simplex} concentrates all suppression mass on the single highest-scoring feature, correctly identifying the strongest differentiating feature but not the full set. \textbf{Groupwise simplex}, given the correct partition $\mathcal{G}_0=\{f_0,f_1,f_2\}$, suppresses the entire differentiating group and assigns zero weight to all shared groups.

\begin{figure}
    \centering
    \includegraphics[width=\linewidth]{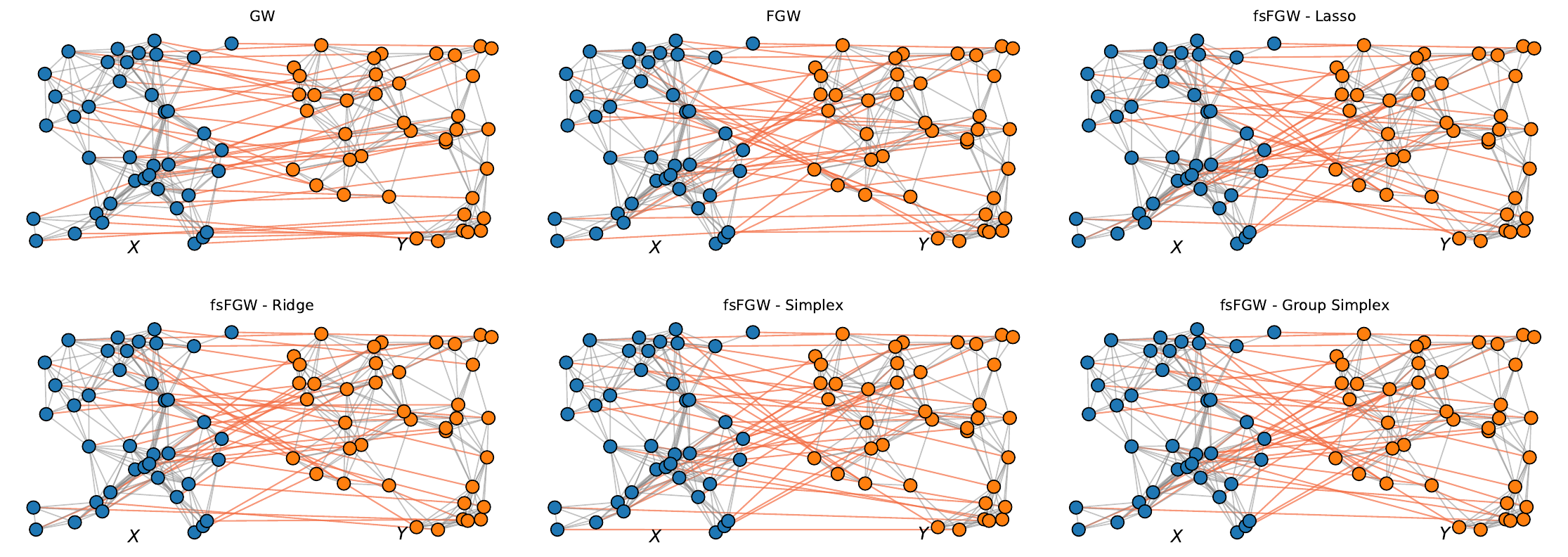}
    \caption{Visualization of matchings according to different transport plans $\mathbf{T}$. The four modes of fsFGW generate substantially distinct transport plans from each other, as well as from GW and FGW.}
    \label{fig:matched_graphs}
\end{figure}

\begin{figure}
    \centering
    \includegraphics[width=0.6\linewidth]{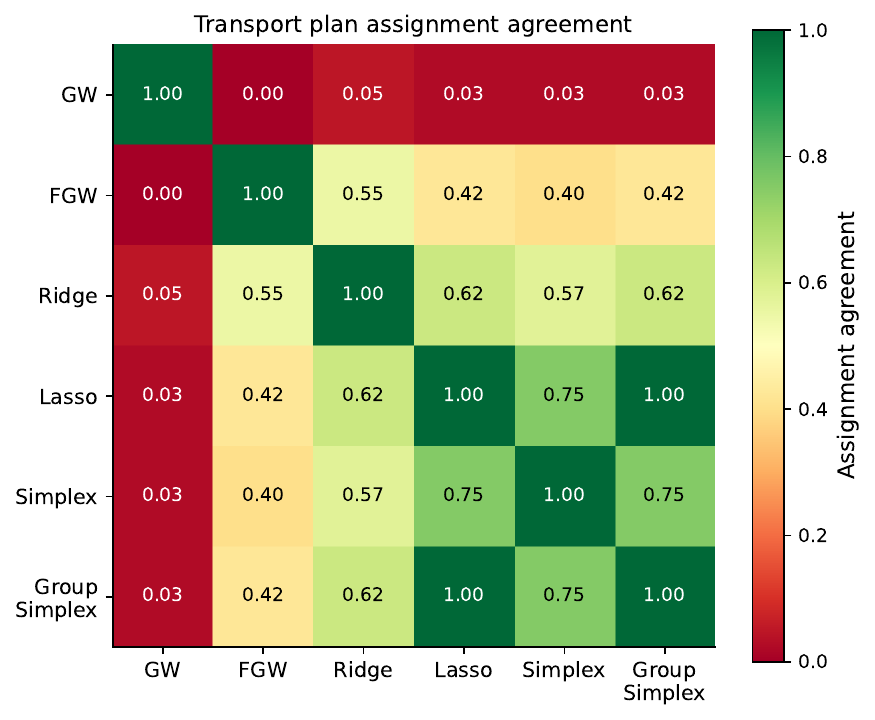}
    \caption{Agreement between different $\mathbf{T}$s, calculated as proportion of node mappings that agree between two transport plans.}
    \label{fig:agreement_heatmap}
\end{figure}

\section{Additional Experiment Details: Graph Benchmarks}
\label{app:benchmark}

\textsc{Frankenstein}~\cite{orsini2015graph} and \textsc{Proteins-full}~\cite{borgwardt2005protein,dobson2003distinguishing} are distributed through 
TUDataset~\cite{morris2020tudataset} and PyTorch-Geometric~\cite{Fey/Lenssen/2019,Fey/etal/2025} under MIT License.
\textsc{ogbg-molbace}~\cite{hiv,wu2018moleculenet} is released under the MIT License through the 
Open Graph Benchmark~\cite{hu2020open,hu2021ogblsc}. We use them solely for non-commercial research purposes consistent with 
their established use in the graph learning literature. 
\begin{figure}
    \centering
    \includegraphics[width=\linewidth]{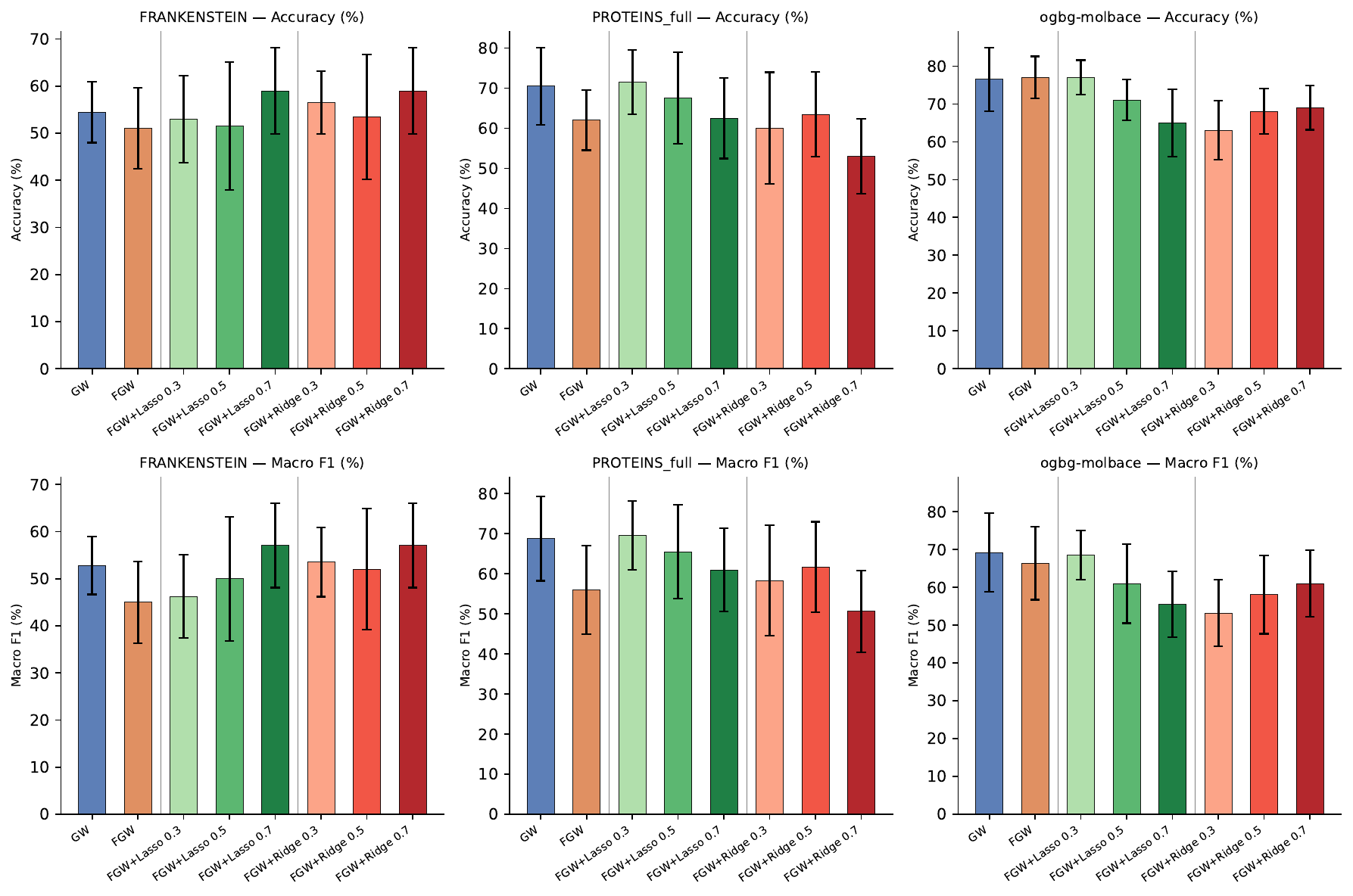}
    \caption{Classification results in Table~\ref{tab:benchmark}.}
    \label{fig:classification}
\end{figure}

\begin{figure}
    \centering
    \includegraphics[width=\linewidth]{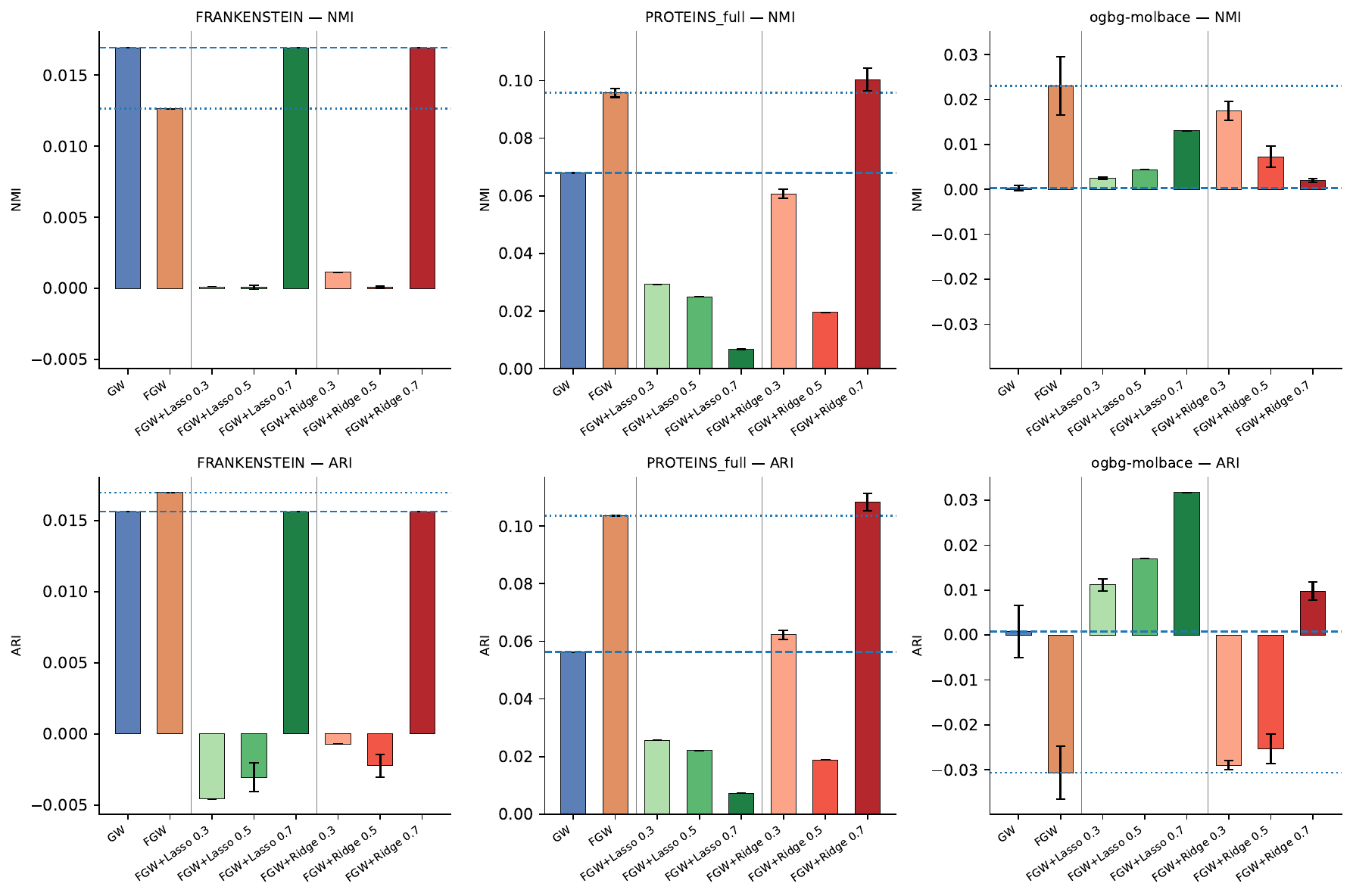}
    \caption{Clustering results in Table~\ref{tab:benchmark}.}
    \label{fig:clustering}
\end{figure}

Graphs with more than 60 (30 for \textsc{ogbg-molbace}) nodes are excluded to keep computation fast. From the remaining graphs, a stratified subsample of 200 graphs is drawn. Node features are normalized globally to $[0,1]$ per dimension across the subsample. Per-feature cost matrices $\mathbf{M}_{r}$ are further normalized per pair to $[0,1]$.

Classification uses an SVM with RBF kernel $K_{ij} = \exp(-D_{ij}/\sigma)$ where $\sigma$ is the median of nonzero distances, evaluated via 5-fold stratified CV. Clustering uses $k$-means with $k$ equal to the number of classes, averaged over 10 random seeds.

Figures~\ref{fig:classification} and \ref{fig:clustering} visualize the results from Table~\ref{tab:benchmark}. Lastly, Figure~\ref{fig:weights_proteins} shows the weights for \textsc{Proteins-full}. There is a clear group behavior on f21-f28, and f31 is almost always retained. However, we were not able to find what each of the features correspond to despite our best attempts at literature search.
\begin{figure}[ht]
    \centering
    \includegraphics[width=\linewidth]{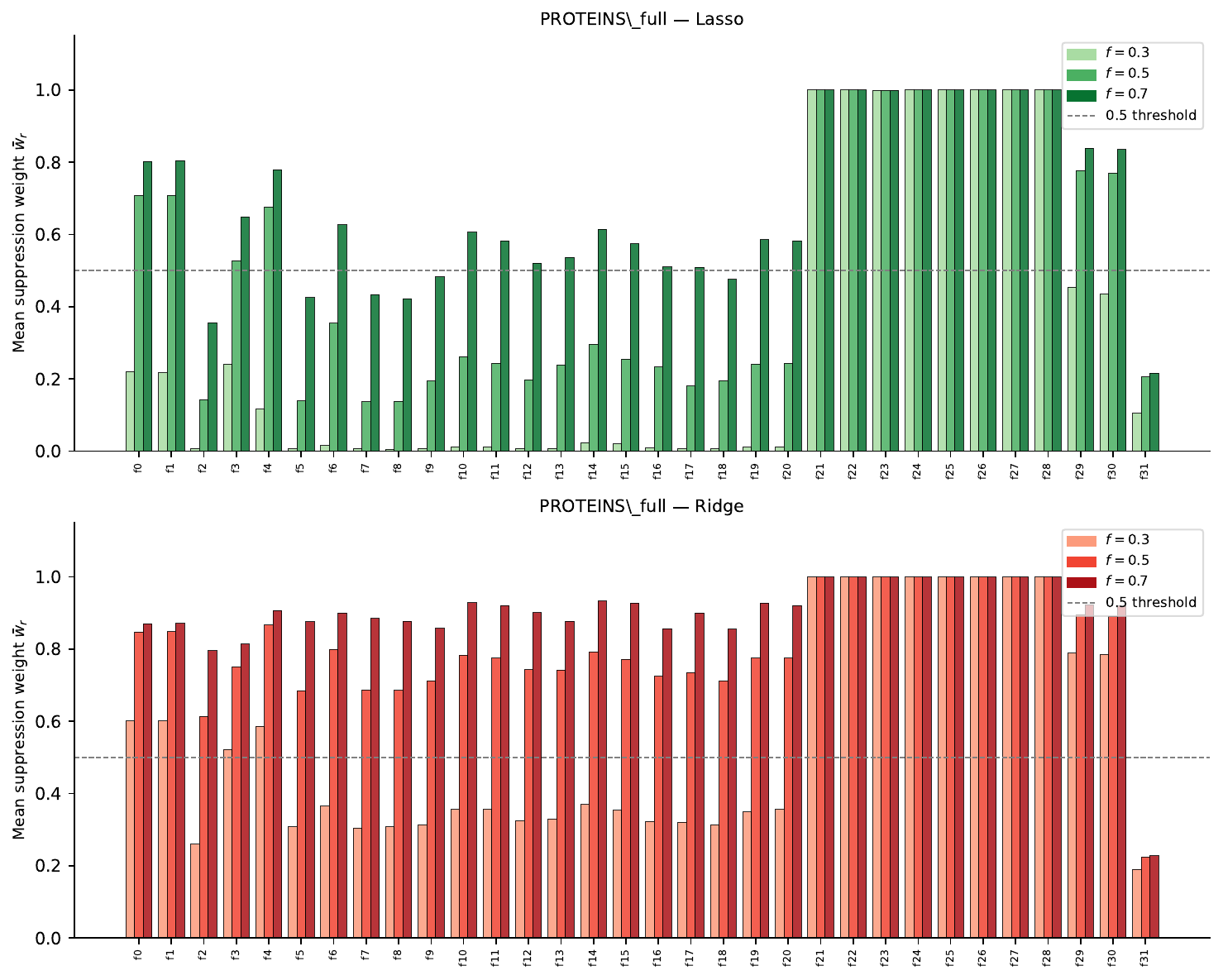}
    \caption{Mean suppression weights per feature on \textsc{Proteins-full}    ($d=32$) for Lasso (left) and Ridge (right) across suppression fractions  $f \in \{0.3, 0.5, 0.7\}$. Errorbars are omitted for readability.}
    \label{fig:weights_proteins}
\end{figure}

\section{Additional Details for Application in Computational Redistricting}
\label{app:redistricting}

\subsection{Experimental Setup}
We apply FGW with feature suppression to compare congressional redistricting plans for North Carolina. Six plans enacted between 2020 and 2025 are considered: House Bill 1029 (2020), Senate Bill 740 (2021), Senate Bill 745 (2022), the court-ordered plan from \textit{Harper v.\ Hall} (2022), Senate Bill 757 (2023), and Senate Bill 249 (2025). Each plan partitions NC's 2,650 voting precincts into 14 congressional districts. We represent each plan as a collection of districts, where each district is a subgraph of the precinct adjacency graph induced by its constituent precincts. 29 node features are the precinct-level sociodemographic and environmental indicators described in the next subsection. The structure matrix $\mathbf{C}$ for each district is the normalized geodesic distance matrix on its precinct subgraph.

To compare two plans $\mathcal{P}$ and $\mathcal{Q}$, we first establish a one-to-one correspondence between their 14 districts using the Hamming distance on precinct membership vectors: district $i$ in $\mathcal{P}$ is represented as a binary vector over all precincts, and the optimal one-to-one matching between the 14 districts of $\mathcal{P}$ and $\mathcal{Q}$ is obtained by solving a linear assignment problem on the $14 \times 14$ Hamming distance matrix. This matching identifies corresponding districts by geographic overlap, independent of district numbering. For each matched district pair $(i, j)$, we compute the GW, FGW, or fsFGW distance between the two district subgraphs. 
The total plan distance is the sum of FGW distances across all 14 matched district pairs, and the reported suppression weights are averaged over the 14 district pairs. Pairwise distances among all $\binom{6}{2} = 15$ plan pairs are used as input to hierarchical clustering with complete linkage. $\alpha=0.5$ and $f=0.1$ for fsFGW Ridge and $f=0.2$ for Lasso.

\subsection{NC Data Feature Processing}\label{app:features}

Precinct boundary geometries, demographic information, and election results were obtained from the Quantifying Gerrymandering group at Duke University (\url{https://quantifyinggerrymandering.pages.oit.duke.edu/codedoc/}). These data products are constructed by merging U.S.\ Census demographic data with precinct-level returns from the North Carolina State Board of Elections. Geographic shapefiles and associated attributes (including BVAP and 2020 election outcomes) are provided through the project’s public data repository and documentation.

\subsection{NC Data Feature Processing}\label{app:features}

We construct a precinct-level dataset for North Carolina by integrating sociodemographic, environmental, and built-environment indicators from census and remotely sensed data sources. Precinct boundaries serve as the spatial unit of redistricting analysis.

The sociodemographic variables are derived from the 2020 American Community Survey (ACS) 5-year estimates~\cite{acs2020}, published by the U.S. Census Bureau and available in the public domain (see \ref{tab:acs_features}). Because most demographic and socioeconomic measures are not reported at the precinct level, we first assemble them at the census block group level and then aggregate them to precincts using population-weighted interpolation based on total population. These variables capture population composition, socioeconomic status, and housing conditions, including total population, sex, age structure, racial and ethnic composition, educational attainment, median household income, unemployment, limited English proficiency, housing tenure, housing cost burden, rent, home value, housing age, overcrowding, and long commuting time. Using block groups as the source geography improves spatial precision relative to direct allocation from coarser census units and provides a consistent basis for precinct-level estimation.

\begin{table}[htp]
\centering
\small
\begin{tabular}{p{3cm}l}
\toprule
\textbf{Variable} & \textbf{Description} \\

\midrule
\multicolumn{2}{l}{\textit{Demographics}} \\
\midrule
POP & Log total population \\
Female & Log percent female \\
MedAge & Median age \\
Black & Percent Black (log transformed) \\
Hisp & Percent Hispanic (log transformed) \\
U18 & Percent under age 18 \\
O65 & Percent age 65 and older \\

\midrule
\multicolumn{2}{l}{\textit{Socioeconomic}} \\
\midrule
LtHS & Percent without high school diploma \\
Income & Median household income \\
Unemp & Unemployment rate (log transformed) \\
Eng & Limited English proficiency (log transformed) \\

\midrule
\multicolumn{2}{l}{\textit{Housing}} \\
\midrule
Apt & Buildings with $\geq 10$ units \\
Mobile & Mobile homes \\
OwnCost30 & Home owner housing cost burden ($\geq 30\%$) \\
RentCost30 & Renter housing cost burden ($\geq 30\%$) \\
Renters & Renter-occupied units \\
MedRent & Median rent \\
HomeVal & Median home value (log transformed) \\
YearBuilt & Median year built (log transformed) \\
Pre80 & Built before 1980 \\

\midrule
\multicolumn{2}{l}{\textit{Environmental}} \\
\midrule
NDVI & Vegetation index \\
LST & Land surface temperature (log transformed) \\
GDP & GDP (log-transformed) \\
NTL & Nighttime light intensity (log transformed) \\
Imperv & Impervious surface (log transformed) \\
Solo & Overcrowding (>1 person per room, log transformed) \\
Commute60 & Commute time $\geq 60$ minutes \\

\midrule
\multicolumn{2}{l}{\textit{Population / Political}} \\
\midrule
BVAP & Black voting-age population (log transformed) \\
G20Dem & Democratic vote share (2020) \\

\bottomrule
\end{tabular}
\caption{Precinct-level features grouped according to the modeling group feature set. 
All variables are derived from ACS or raster data sources as described in the text. 
Prior to modeling, selected variables are log transformed to reduce skewness 
(e.g., population, GDP, nighttime lights, impervious surface, and selected percentage variables), and all features are standardized to zero mean and unit variance.}
\label{tab:acs_features}
\end{table}

To complement these census-based measures, we incorporate remotely sensed indicators that capture landscape, environmental, and development characteristics. Annual mean land surface temperature (LST) for 2020 is derived from MODIS MOD11A1~\cite{wan2015mod11a1} after applying quality-control filters and converting the original product to degrees Celsius. Annual mean normalized difference vegetation index (NDVI) is obtained from MODIS MOD13Q1~\cite{didan2015mod13q1}, also distributed through the LP DAAC under the same open-use terms, using only good-quality pixels identified from the QA layer. Nighttime light intensity (NTL) is derived from the VIIRS nighttime lights V.2 annual composite~\cite{elvidge2021viirs}, produced by the Earth Observation Group and available on an open-access basis, as the annual mean for 2020, while retaining valid zero values to preserve meaningful low-light observations. We also include gridded GDP as an indicator of local economic intensity and development. GDP for 2020 is extracted from the Kummu et al.\ gridded GDP dataset~\cite{kummu2018gdp} and aggregated to precincts using area-weighted sums. These raster-based variables are extracted and processed in Google Earth Engine~\cite{gorelick2017gee} using the \texttt{reduceRegions()} function.

Together, these data provide a harmonized precinct-level dataset that captures demographic structure, socioeconomic conditions, environmental exposure, vegetation, nighttime activity, and development intensity, enabling a multidimensional assessment of redistricting patterns.

\paragraph{Data licenses and terms of use.}
U.S.\ Census ACS data are in the public domain. MODIS MOD11A1 and MOD13Q1 products are distributed through the LP DAAC with no restrictions on subsequent use, sale, or redistribution. VIIRS nighttime lights V.2 are available on an open-access basis from the Earth Observation Group, Payne Institute for Public Policy. The Kummu et al.\ gridded GDP dataset is released under CC0 (public domain). All data are used solely for non-commercial academic research.

North Carolina redistricting shapefiles, demographic attributes (including BVAP), and election data were obtained from the Quantifying Gerrymandering group at Duke University and are publicly available through the project repository (\url{https://quantifyinggerrymandering.pages.oit.duke.edu/codedoc/}). These datasets were constructed by merging U.S.\ Census data with North Carolina State Board of Elections returns and are publicly available through the project repository. The congressional and legislative districting plans analyzed in this study were obtained from the North Carolina General Assembly redistricting portal (\url{https://www.ncleg.gov/redistricting}).

\subsection{Additional Results}

\begin{figure}
    \centering
    \includegraphics[width=\linewidth]{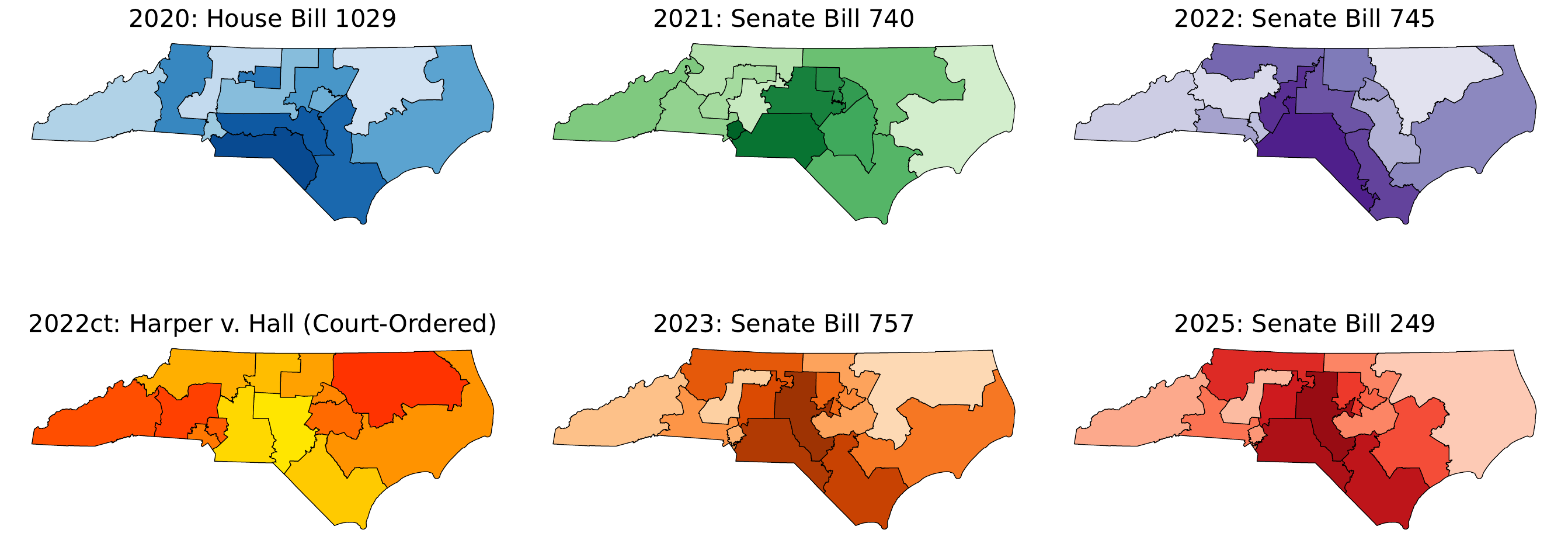}
    \caption{North Carolina Congressional Redistricting Plans (2020-2025)}
    \label{fig:ncmaps}
\end{figure}

Figure~\ref{fig:dendrograms} shows hierarchical clustering of the six NC redistricting plans under all six distance variants: GW, FGW, fsFGW (Lasso), fsFGW (Ridge), fsFGW (Simplex), and fsFGW (Group Simplex). The two-cluster structure separating {Plan21, Plan23, Plan25} from {Plan20, Plan22, Plan22ct} is consistent across all fsFGW modes, with Plans 23 and 25 merging at notably low distance in every case. Plan 21 is consistently isolated as the last to merge within its group. GW is the only variant that places Plan 20 with Plan 21, 23, and 25, confirming that incorporating electoral features systematically shifts Plan 20 toward Plans 22 and 22ct across all regularization strategies.

\begin{figure}[htbp]
    \centering

    % Row 1
    \begin{subfigure}{0.45\textwidth}
        \centering
        \includegraphics[width=\linewidth]{Images/GW_dendrogram.pdf}
    \end{subfigure}\hfill
    \begin{subfigure}{0.45\textwidth}
        \centering
        \includegraphics[width=\linewidth]{Images/FGW_dendrogram.pdf}
    \end{subfigure}

    \begin{subfigure}{0.45\textwidth}
        \centering
        \includegraphics[width=\linewidth]{Images/fsFGW_Lasso_dendrogram.pdf}
    \end{subfigure}\hfill
    \begin{subfigure}{0.45\textwidth}
        \centering
        \includegraphics[width=\linewidth]{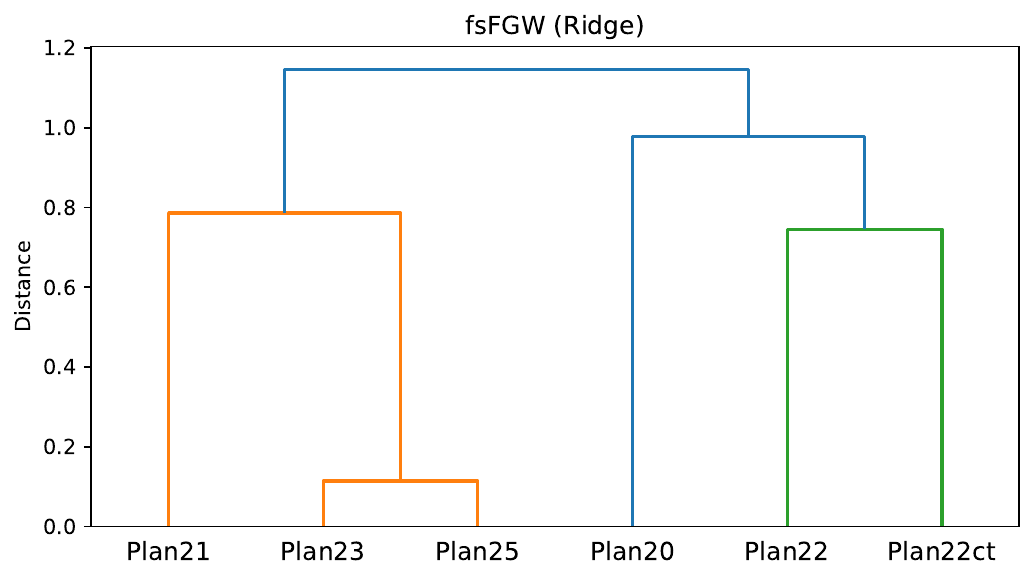}
    \end{subfigure}
    
    \begin{subfigure}{0.45\textwidth}
        \centering
        \includegraphics[width=\linewidth]{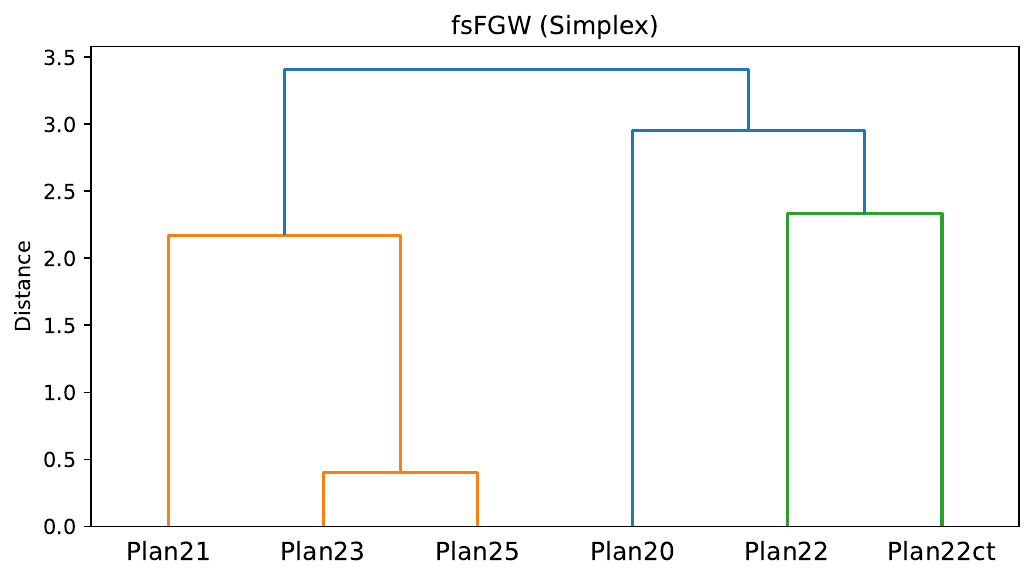}
    \end{subfigure}\hfill
    \begin{subfigure}{0.45\textwidth}
        \centering
        \includegraphics[width=\linewidth]{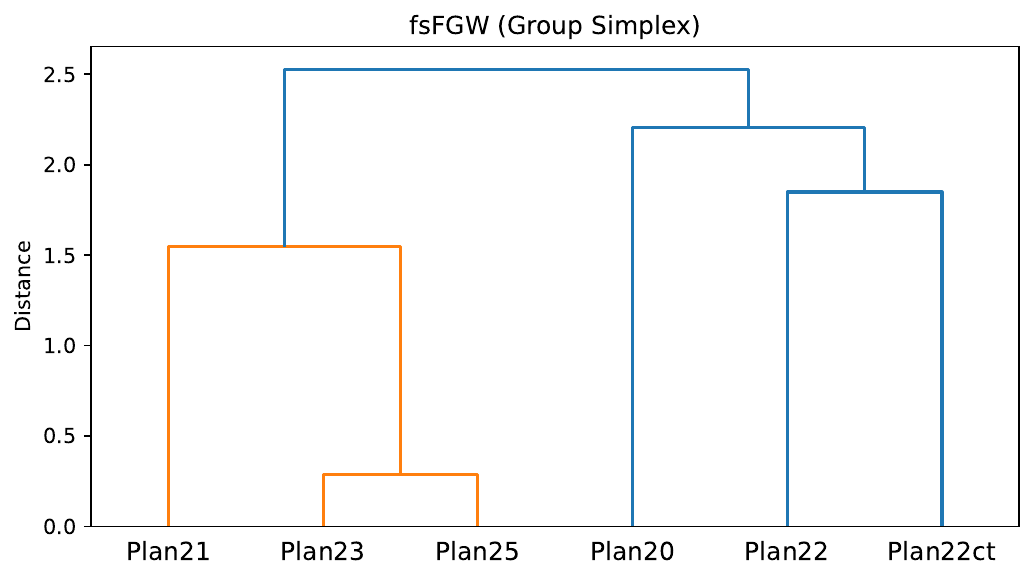}
    \end{subfigure}

    \caption{Hierarchical Clustering of NC Congressional Maps. }
    \label{fig:dendrograms}
\end{figure}

Figure~\ref{fig:ncweights22v22ct} reports the top mean suppression weights across all four fsFGW modes for the Plan 22 vs. Plan 22ct comparison. The simplex mode concentrates suppression on Mobile, Black, BVAP, and Eng, consistent with the main text. Ridge produces broadly elevated weights across many features with less sparsity, with Apt, Mobile, Solo, and NTL leading. Lasso yields intermediate sparsity, with Mobile, Apt, and Eng most suppressed. The groupwise simplex spreads weight more evenly across socioeconomic and demographic groups, with LtHS, Eng, Unemp, and Income leading. Demographic minority and housing characteristics consistently appear among the most suppressed features in the penalty-based and simplex modes, while the groupwise simplex additionally emphasizes socioeconomic indicators.

\begin{figure}[htbp]
    \centering

    \begin{subfigure}{0.8\textwidth}
        \centering
        \includegraphics[width=\linewidth]{Images/simplex_Plan22vsPlan22ct_mean_suppression_weights.pdf}
    \end{subfigure}
    
    \begin{subfigure}{0.8\textwidth}
        \centering
        \includegraphics[width=\linewidth]{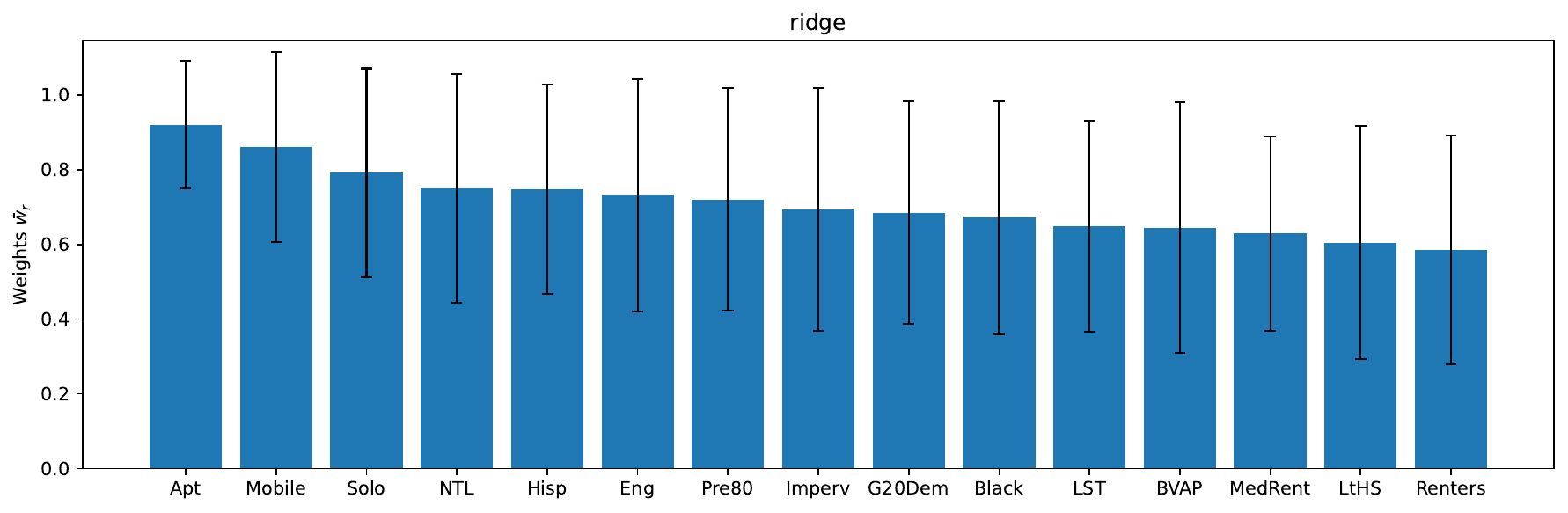}
    \end{subfigure}
    
    \begin{subfigure}{0.8\textwidth}
        \centering
        \includegraphics[width=\linewidth]{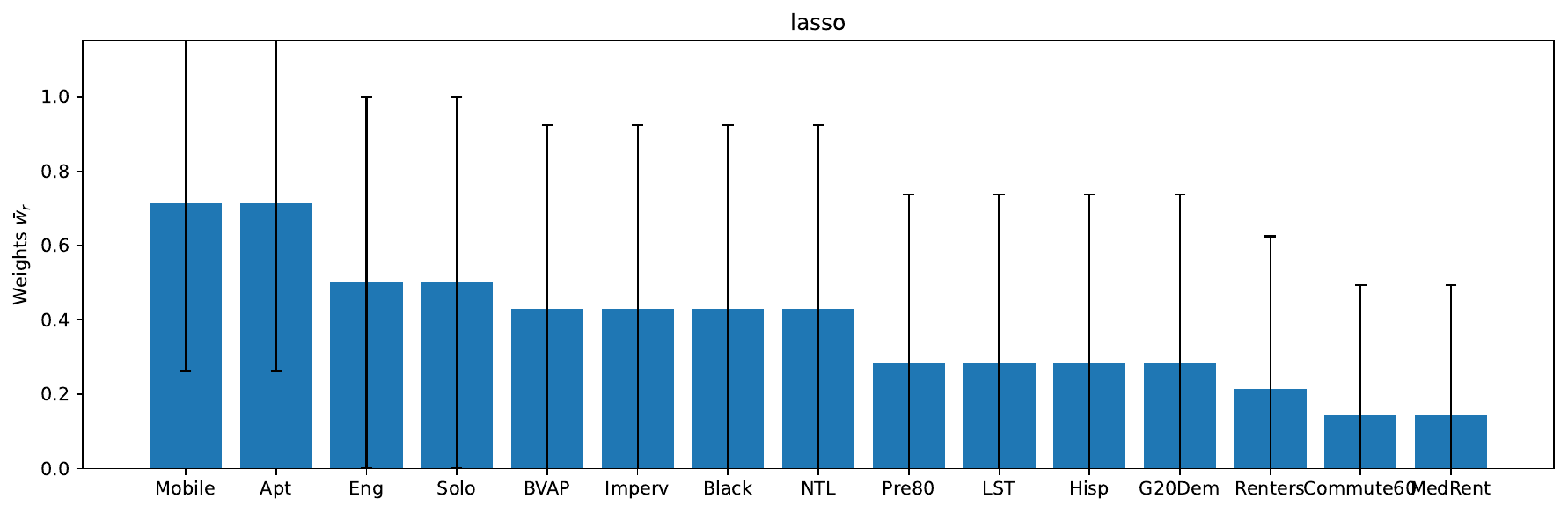}
    \end{subfigure}

    \begin{subfigure}{0.8\textwidth}
        \centering
        \includegraphics[width=\linewidth]{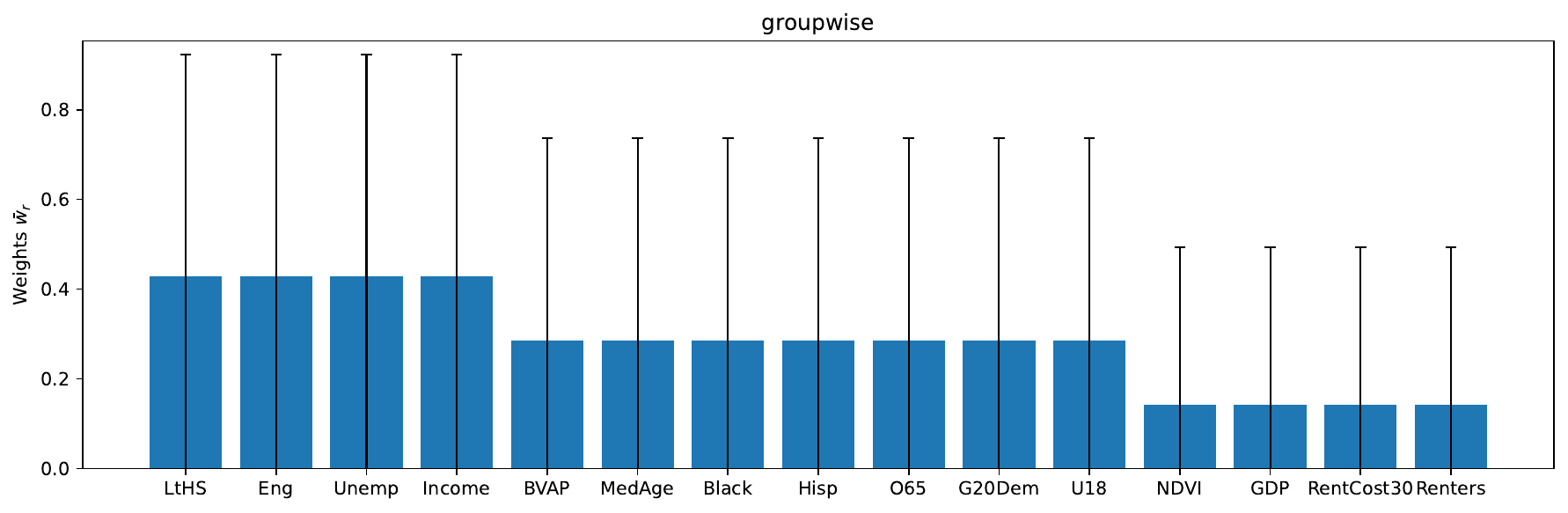}
    \end{subfigure}

    \caption{Top 10 mean suppression weights ($\pm$ 1 standard deviation) for Plan 22 vs. Plan22ct.}
    \label{fig:ncweights22v22ct}
\end{figure}

Figure~\ref{fig:ncweights23v25} shows suppression weights for each district pair under all four fsFGW modes when comparing Plan 23 and Plan 25. Lasso confirms the sparsity reported in the main text: only d0 and d7 receive nonzero weight. Ridge produces a denser pattern, with d0 and d7 again showing the strongest suppression but with nonzero weights spread across additional district pairs and features, reflecting Ridge's continuous rather than binary suppression. Simplex concentrates suppression entirely on BVAP for most active district pairs, as expected from its winner-take-all weight update. Groupwise Simplex suppresses BVAP and the demographic group broadly across nearly all district pairs, reflecting the coarser group-level granularity. Together, the four modes converge on BVAP and racial composition as the primary axis of difference, with Lasso providing the most interpretable and localized signal.

\begin{figure}[htbp]
    \centering

    % Row 1
    \begin{subfigure}{0.8\textwidth}
        \centering
        \includegraphics[width=\linewidth]{Images/Lasso_weights_23v25.pdf}
    \end{subfigure}
    
    \begin{subfigure}{0.8\textwidth}
        \centering
        \includegraphics[width=\linewidth]{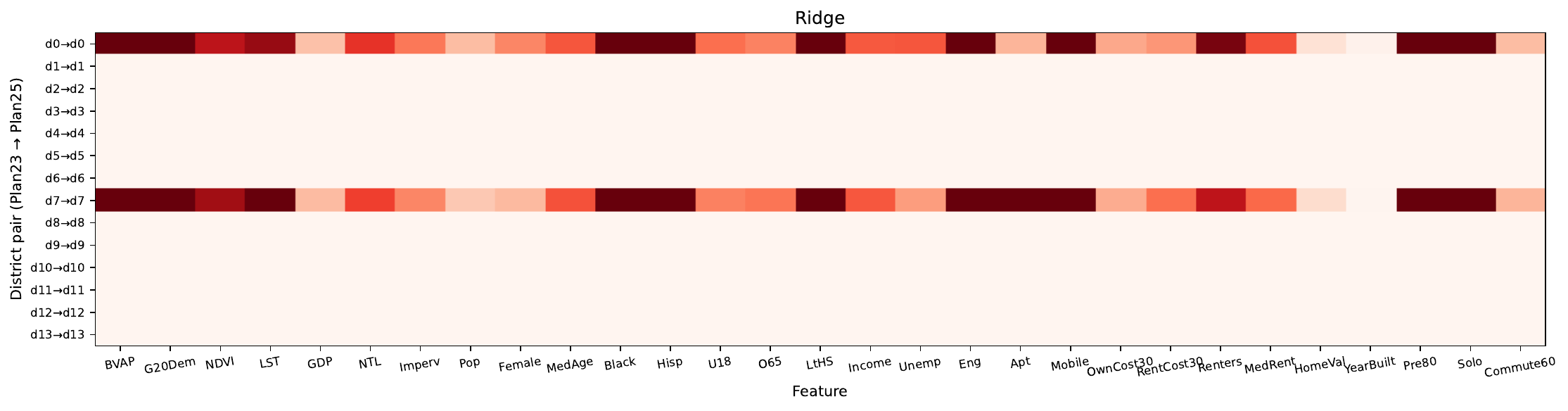}
    \end{subfigure}
    
    \begin{subfigure}{0.8\textwidth}
        \centering
        \includegraphics[width=\linewidth]{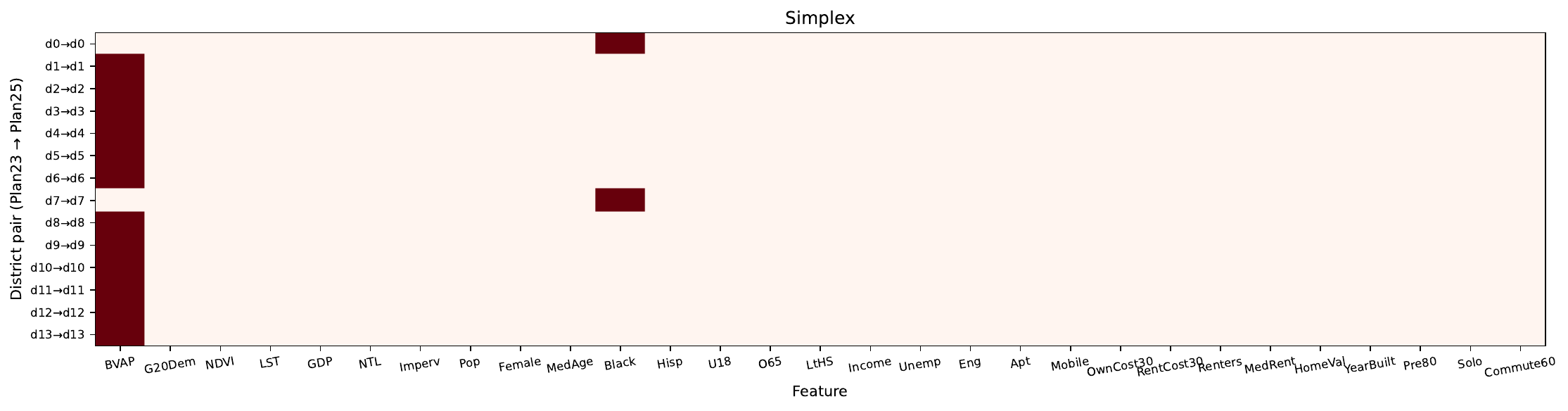}
    \end{subfigure}
    
    \begin{subfigure}{0.8\textwidth}
        \centering
        \includegraphics[width=\linewidth]{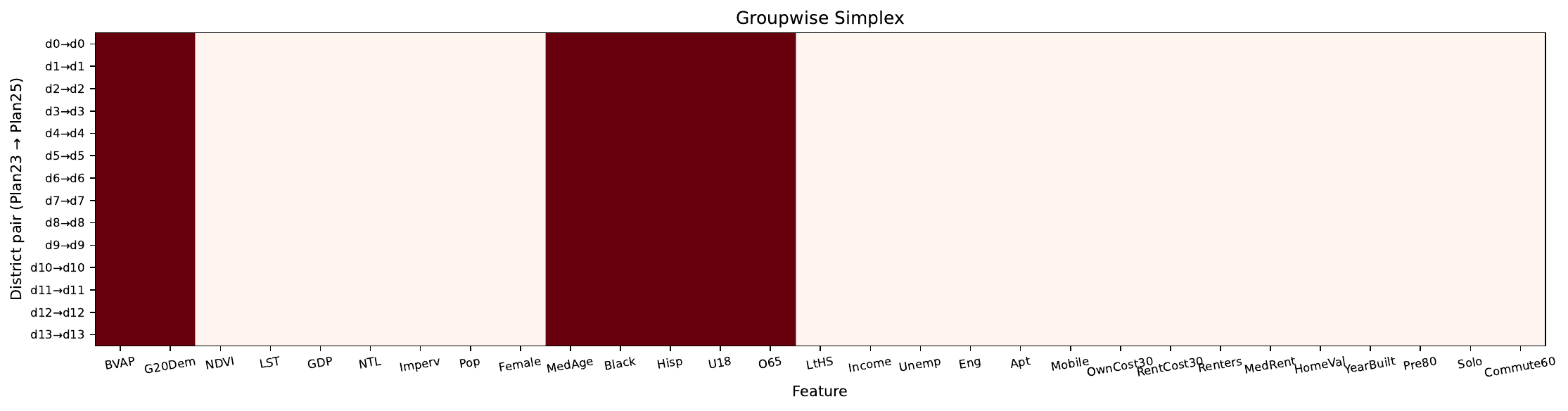}
    \end{subfigure}

    \caption{Weights for each district pair compared in Plan 23 vs. Plan25.}
    \label{fig:ncweights23v25}
\end{figure}

Figure~\ref{fig:featurecluster} shows the result of clustering features by their suppression patterns from comparing Plan22 and Plan22ct with fsFGW (Lasso). The dendrogram reveals three broad feature clusters. The first groups racial, ethnic, and linguistic minority characteristics (Black, BVAP, Hisp, Eng, Solo, Pre80). The second cluster captures urbanization and political indicators (Apt, Mobile, NTL, Imperv, G20Dem, LST), which share similar suppression patterns likely due to their common correlation with urban density. The third cluster comprises socioeconomic and housing cost variables (OwnCost30, YearBuilt, LtHS, RentCost30, Unemp, Commute60, GDP, HomeVal, MedRent, U18, Income, Renters).

\begin{figure}
    \centering
    \includegraphics[width=\linewidth]{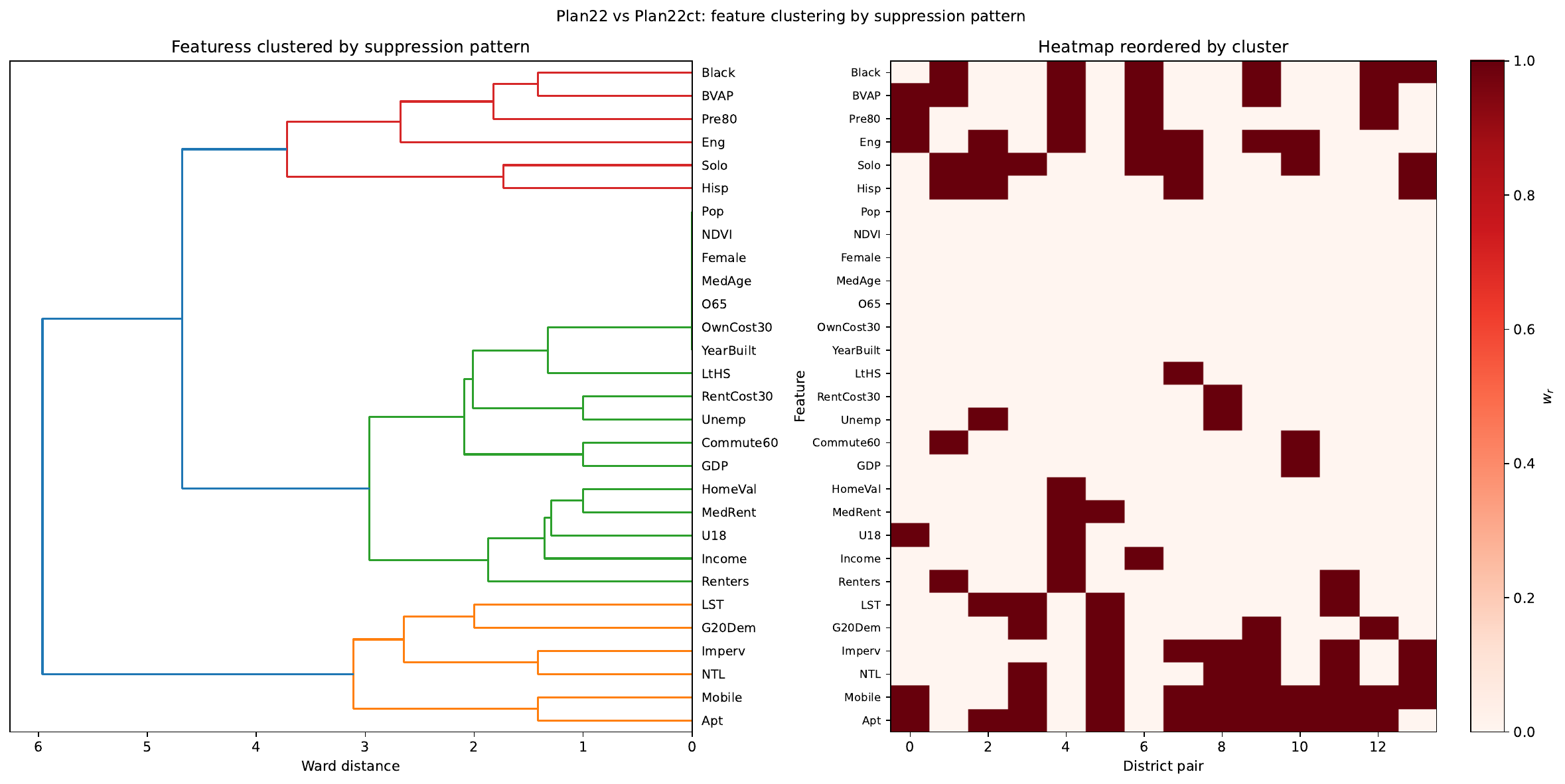}
    \caption{fsFGW (Lasso) suppression weight heatmap for Plan 22 vs. Plan 22ct, with features clustered by Ward linkage on their suppression patterns (left) and the corresponding heatmap reordered by cluster (right). Three feature clusters emerge: racial, ethnic, and linguistic minority characteristics (red); urbanization and political indicators (orange); and socioeconomic and housing cost variables (green).}
    \label{fig:featurecluster}
\end{figure}

\end{document}